\newcommand{\wt}[1]{\widetilde{#1}}
\newcommand{\xr}[1][n]{x_{1:#1}}
\newcommand{\W}{\mathcal{W}}
\newcommand{\Reg}{\mathrm{\mathbf{Reg}_{n}}}
\newcommand{\Rel}{\mathrm{\mathbf{Rel}}}
\newcommand{\Rad}{\mathrm{\mathbf{Rad}}}
\newcommand{\Bspace}{\mathfrak{B}}
\newcommand{\grad}{\nabla}
\newcommand{\yh}{\hat{y}}
\theoremstyle{definition}  
\newtheorem{lemma}{Lemma}
\newtheorem{corollary}{Corollary}
\newtheorem{proposition}{Proposition}
\newtheorem{fact}{Fact}
\newtheorem{assumption}{Assumption}
\theoremstyle{plain}
\newtheorem{theorem}{Theorem}
\title{\textbf{Parameter-Free Online Learning via Model Selection}}
\author{
  Dylan J. Foster\thanks{Cornell University}
  \and
  Satyen Kale\thanks{Google Research}
  \and
  Mehryar Mohri\thanks{NYU and Google Research}
  \and
  Karthik Sridharan\footnotemark[1]
  }
\date{}
\newcommand{\mainalg}{\textsc{MultiScaleFTPL}}
\newcommand{\ocoalg}{\textsc{MultiScaleOCO}}
\newcommand{\supalg}{\textsc{MultiScaleLearning}}
\newcommand{\ldef}{:=}
\newcommand{\ignore}[1]{}
\begin{document}
\maketitle
\begin{abstract}
  We introduce an efficient algorithmic framework for model selection in online learning, also known as parameter-free online learning. 
 Departing from previous work, which has focused on highly structured function classes such as nested balls in Hilbert space, we propose a generic meta-algorithm framework that achieves online model selection oracle inequalities under minimal structural assumptions. We give the first computationally efficient parameter-free algorithms that work in arbitrary Banach spaces under mild smoothness assumptions; previous results applied only to Hilbert spaces. We further derive new oracle inequalities for matrix classes, non-nested convex sets, and $\R^{d}$ with generic regularizers. Finally, we generalize these results by providing oracle inequalities for arbitrary non-linear classes in the online supervised learning model. 
  These results are all derived through a unified meta-algorithm scheme using a novel ``multi-scale'' algorithm for prediction with expert advice based on random playout, which may be of independent interest.

\end{abstract}

\section{Introduction}
\label{sec:introduction}
A key problem in the design of learning algorithms is the choice of the hypothesis set $\F$.  This is known as the \emph{model selection} problem.  The choice of $\F$ is driven by inherent trade-offs. In the statistical learning setting, this can be analyzed in terms of the \emph{estimation} and \emph{approximation errors}.  A richer or more complex $\F$ helps better approximate the Bayes predictor (smaller approximation error).  On the other hand, a hypothesis set that is too complex may have too large a VC-dimension or have unfavorable Rademacher complexity, thereby resulting in looser guarantees on the difference between the loss of a hypothesis and that of the best-in class (large estimation error).

In the batch setting, this problem has been extensively studied with the main ideas originating in the seminal work of \cite{VapChe71} and \cite{vapnik1982estimation} and the principle of Structural Risk Minimization (SRM). This is typically formulated as follows: let $(\F_i)_{i \in \N}$ be an infinite sequence of hypothesis sets (or models); the problem consists of using the training sample to select a hypothesis set $\F_i$ with a favorable estimation-approximation trade-off and choosing the best hypothesis $f$ in $\F_i$.

If we had access to a hypothetical oracle informing us of the best choice of $i$ for a given instance, the problem would reduce to the standard one of learning with a fixed hypothesis set. Remarkably though, techniques such as SRM or similar penalty-based model selection methods return a hypothesis $f^*$ that enjoys finite-sample learning guarantees that are almost as favorable as those that would be obtained had an oracle informed us of the index $i^*$ of the best-in-class classifier's hypothesis set \citep{vapnik1982estimation,DevroyeLugosi96,shawe1998structural,Koltchinskii2001,bartlett2002model,massart2007concentration}. Such guarantees are sometimes referred to as \emph{oracle inequalities}. They can be derived even for data-dependent penalties \citep{Koltchinskii2001,bartlett2002model,BarMed03}.

Such results naturally raise the following questions in the online setting: can we develop an analogous theory of model selection in online learning? Can we design online algorithms for model selection with solutions benefiting from strong guarantees, analogous to the batch ones? Unlike the statistical setting, in online learning one cannot split samples to first learn the optimal predictor within each subclass and then later learn the optimal subclass choice.

A series of recent works on online learning provide some positive
results along that
direction. On the algorithmic side, \cite{mcmahan2013minimax,mcmahan2014unconstrained,orabona2014simultaneous,orabona2016coin} 
present solutions that efficiently achieve model selection oracle inequalities for the
important special case where $\F_{1}, \F_{2}, \ldots$ is a sequence
of nested balls in a Hilbert space. On the theoretical side, a
different line of work focusing on general hypothesis
classes \citep{foster2015adaptive} uses martingale-based sequential
complexity measures to show that, information-theoretically, one can
obtain oracle inequalities in the online setting at a level of
generality comparable to that of the batch statistical learning.
However, this last result is not algorithmic.

The first approach that a familiar reader might think of for tackling the online model selection problem is to run for each $i$ an online learning algorithm that minimizes regret against $\F_i$, and then aggregate over these algorithms using the multiplicative weights algorithm for prediction with expert advice. This would work if all the losses or ``experts" considered were uniformly bounded by a reasonably small quantity. However, in many reasonable problems --- particularly those arising in the context of online convex optimization ---  the losses of predictors or experts for each $\F_i$ may grow with $i$. Using simple aggregation would scale our regret with the magnitude of the largest $\F_i$ and not the $i^*$ we want to compare against. This is the main technical challenge faced in this context, and one that we fully address in this paper.

Our results are based on a novel \emph{multi-scale algorithm} for prediction with expert advice. This algorithm works in a situation where the different experts' losses lie in different ranges, and guarantees that the regret to each individual expert is adapted to the range of its losses. The algorithm can also take advantage of a given prior over the experts reflecting their importance. This general, abstract setting of prediction with expert advice yields online model selection algorithms for a host of applications detailed below in a straightforward manner.

First, we give efficient algorithms for model selection for nested linear classes that provide oracle inequalities in terms of the norm of the benchmark to which the algorithm's performance is compared.  Our algorithm works for any norm, which considerably generalizes previous work \citep{mcmahan2013minimax,mcmahan2014unconstrained, orabona2014simultaneous, orabona2016coin} and gives the first polynomial time online model selection for a number of online linear optimization settings. This includes online oracle inequalities for high-dimensional learning tasks such as online PCA and online matrix prediction. We then generalize these results even further by providing oracle inequalities for arbitrary non-linear classes in the online supervised learning model. This yields algorithms for applications such as online penalized risk minimization and multiple kernel learning. 

\subsection{Preliminaries}
\paragraph{Notation.} For a given norm $\nrm{\cdot}$, let $\nrm{\cdot}_{\star}$ denote the dual norm. Likewise, for any function $F$,  $F^{\star}$ will denote its Fenchel conjugate. For a Banach space $(\Bspace, \nrm{\cdot})$, the dual is $(\Bspace^{\star}, \nrm{\cdot}_{\star})$. We use $\xr[n]$ as shorthand for a sequence of vectors $(x_1,\ldots,x_{n})$. For such sequences, we will use $x_{t}[i]$ to denote the $t$th vector's $i$th coordinate. We let $e_i$ denote the $i$th standard basis vector. $\nrm{\cdot}_{p}$ denotes the $\ls_p$ norm, $\nrm{\cdot}_{\sigma}$ denotes the spectral norm, and $\nrm{\cdot}_{\Sigma}$ denotes the trace norm. For any $p\in\brk{1,\infty}$, let $p'$ be such that $\frac{1}{p} + \frac{1}{p'}=1$.

\paragraph{Setup and goals.} We work in two closely related settings: online convex optimization (\pref{proto:oco}) and online supervised learning (\pref{proto:supervised_learning}). In online convex optimization, the learner selects decisions from a convex subset $\mc{W}$ of some Banach space $\Bspace$. Regret to a comparator $w\in\mc{W}$ in this setting is defined as $\sum_{t=1}^{n}f_{t}(w_t) - \sum_{t=1}^{n}f_t(w)$.

Suppose $\mc{W}$ can be decomposed into sets $\W_{1},\W_2,\ldots$. For a fixed set $\mc{W}_{k}$, the optimal regret, if one tailors the algorithm to compete with $\mc{W}_{k}$, is typically characterized by some measure of intrinsic complexity of the class (such as Littlestone's dimension \citep{BenPalSha09} and sequential Rademacher complexity \citep{RakSriTew10}), denoted $\mathbf{Comp}_{n}(\mc{W}_k)$. We would like to develop algorithms that predict a sequence $(w_t)$ such that
\begin{equation}
\label{eq:oco_oracle}
\sum_{t=1}^{n}f_{t}(w_t) - \min_{w\in\W_k}\sum_{t=1}^{n}f_t(w) \leq{} \mathbf{Comp}_{n}(\mc{W}_k) + \mathbf{Pen}_{n}(k)\quad\forall{}k.
\end{equation}
This equation is called an \emph{oracle inequality} and states that the performance of the sequence $(w_t)$ matches that of a comparator that minimizes the bias-variance tradeoff $\min_{k}\crl*{\min_{w\in\W_k}\sum_{t=1}^{n}f_t(w) + \mathbf{Comp}_{n}(\mc{W}_k)}$, up to a penalty $\mathbf{Pen}_{n}(k)$ whose scale ideally matches that of $\mathbf{Comp}_{n}(\mc{W}_k)$. We shall see shortly that ensuring that the scale of $\mathbf{Pen}_{n}(k)$ does indeed match is the core technical challenge in developing online oracle inequalities for commonly used classes.

\begin{algorithm}\floatname{algorithm}{Protocol}\caption{Online Convex Optimization}
\label{proto:oco}
\begin{algorithmic}[0]
\For{$t=1,\ldots,n$}
\State{Learner selects strategy $q_t\in\Delta(\W)$ for convex decision set $\mc{W}$.}
\State{Nature selects convex loss $f_{t} \colon \W \to \R$.}
\State{Learner draws $w_t\sim{}q_t$ and incurs loss $f_{t}(w_t)$.}
\EndFor
\end{algorithmic}
\end{algorithm}

In the supervised learning setting we measure regret against a benchmark class $\mc{F}=\bigcup_{k=1}^{\infty}\mc{F}_{k}$ of functions $f \colon \mc{X}\to{}\R$, where $\mc{X}$ is some abstract context space, also called feature space. In this case, the desired oracle inequality has the form:
\begin{equation}
\label{eq:supervised_regret}
\sum_{t=1}^{n}\ls(\yh_t, y_t) - \inf_{f\in\F_k}\sum_{t=1}^{n}\ls(f(x_t), y_t) \leq{} \mathbf{Comp}_{n}(\mc{F}_k) + \mathbf{Pen}_{n}(k)\quad\forall{}k.
\end{equation}

\begin{algorithm}\floatname{algorithm}{Protocol}\caption{Online Supervised Learning}
\label{proto:supervised_learning}
\begin{algorithmic}[0]
\For{$t=1,\ldots,n$}
\State{Nature provides $x_{t}\in\X$.}
\State{Learner selects randomized strategy $q_t\in\Delta(\R)$.}
\State{Nature provides outcome $y_t\in\Y$.}
\State{Learner draws $\yh_t\sim{}q_t$ and incurs loss $\ls(\yh_t, y_t)$.}
\EndFor
\end{algorithmic}
\end{algorithm}

\section{Online Model Selection}
\label{sec:slow_rates}

\subsection{The need for multi-scale aggregation}

Let us briefly motivate the main technical challenge overcome by the model selection approach we consider. The most widely studied oracle inequality in online learning has the following form
\begin{equation}
\label{eq:oco_hilbert}
\sum_{t=1}^{n}f_{t}(w_t) - \sum_{t=1}^{n}f_{t}(w) \leq{} O\prn*{(\nrm{w}_{2}+1)\sqrt{n\cdot{}\log\prn*{(\nrm{w}_{2}+1)n}}}  \quad\forall{}w\in\R^{d}.
\end{equation}
In light of \pref{eq:oco_oracle}, a \emph{model selection} approach to obtaining this inequality would be to split the set $\mc{W}=\R^{d}$ into $\ls_{2}$ norm balls of doubling radius, i.e. $\mc{W}_{k}=\crl*{w\mid{}\nrm*{w}_{2}\leq{}2^k}$. A standard fact \citep{hazan2016introduction} is that such a set has $\mathbf{Comp}_{n}(\mc{W}_{k})=2^{k}\sqrt{n}$ if one optimizes over it using Mirror Descent, and so obtaining the oracle inequality \pref{eq:oco_oracle} is sufficient to recover \pref{eq:oco_hilbert}, so long as $\mathbf{Pen}_{n}(k)$ is not too large relative to $\mathbf{Comp}_{n}(\mc{W}_{k})$.

Online model selection is fundamentally a problem of prediction with expert advice~\citep{PLG}, where the experts correspond to the different model classes one is choosing from. Our basic meta-algorithm, \mainalg{} (\pref{alg:general}), operates in the following setup. The algorithm has access to a finite number, $N$, of experts. In each round, the algorithm is required to choose one of the $N$ experts. Then the losses of all experts are revealed, and the algorithm incurs the loss of the chosen expert.

The twist from the standard setup is that the losses of all the experts are {\em not} uniformly bounded in the same range. Indeed, for the setup described for the oracle inequality \pref{eq:oco_hilbert}, class $\mc{W}_{k}$ will produce predictions with norm as large as $2^{k}$. Therefore, here, we assume that expert $i$ incurs losses in the range $[-c_i, c_i]$, for some known parameter $c_i \geq 0$. The goal is to design an online learning algorithm whose regret to expert $i$ scales with $c_i$, rather than $\max_i c_i$, which is what previous algorithms for learning from expert advice (such as the standard multiplicative weights strategy or AdaHedge \citep{de2014follow}) would achieve. Indeed, any regret bound scaling in $\max_i c_i$ will be far too large to achieve \pref{eq:oco_hilbert}, as the term $\mathbf{Pen}_{n}(k)$ will dominate. This new type of scale-sensitive regret bound, achieved by our algorithm \mainalg{}, is stated below.

\begin{algorithm}\caption{}\label{alg:general}
\begin{algorithmic}[0]
\Procedure{MultiScaleFTPL}{$c,\pi$}\Comment{Scale vector $c$ with $c_i\geq{}1$, prior distribution $\pi$.}
\For{time $t=1,\ldots,n$:}
\State{} Draw sign vectors $\sigma_{t+1},\ldots,\sigma_{n}\in\pmo^{N}$ each uniformly at random.
\State{} Compute distribution
\[
\quad\quad\quad~~ p_{t}(\sigma_{t+1:n}) = \argmin_{p\in\Delta_{N}}\sup_{g_t:\abs{g_t[i]}\leq{}c_i}\brk*{\tri*{p, g_t} + \sup_{i\in\brk{N}}\brk*{
- \sum_{s=1}^{t}\tri*{e_i, g_{s}} + 4\sum_{s=t+1}^{n}\sigma_s[i]c_i - B(i)
}
},\]
\indent{}\indent{}where $B(i)=5c_i\sqrt{n\log\prn*{4c_i^2{}n/\pi_i}}$.
\State{} Play $i_t\sim{}p_t$.
\State{} Observe loss vector $g_{t}$.
\EndFor
\EndProcedure
\end{algorithmic}
\end{algorithm}

\begin{theorem}
\label{theorem:ftpl_alg}
Suppose the loss sequence $(g_t)_{t\leq{}n}$ satisfies $\abs{g_{t}[i]}\leq{}c_{i}$ for a sequence $(c_i)_{i\in\brk{N}}$ with each $c_i\geq{}1$. Let $\pi \in \Delta_{N}$ be a given prior distribution on the experts. Then, playing the strategy $(p_t)_{t\leq{}n}$ given by \pref{alg:general}, \mainalg{} yields the following regret bound:\footnote{This regret bound holds under expectation over the player's randomization. It is assumed that each $g_t$ is selected before the randomized strategy $p_t$ is revealed, but may adapt to the distribution over $p_t$. In fact, a slightly stronger version of this bound holds, namely $\En\brk*{\sum_{t=1}^{n}\tri*{e_{i_t}, g_t} - \min_{i\in\brk{N}}\crl*{\sum_{t=1}^{n}\tri*{e_i, g_t} + O\prn*{c_i\sqrt{n\log\prn*{nc_i/\pi_i}}}} }\leq{}0$. A similar strengthening applies to all subsequent bounds.}
\begin{equation}
\label{eq:ftpl_regret}
\En\brk*{\sum_{t=1}^{n}\tri*{e_{i_t}, g_t} - \sum_{t=1}^{n}\tri*{e_i, g_t}}\leq{}  O\prn*{c_i\sqrt{n\log\prn*{nc_i/\pi_i}}} \quad\forall{}i\in\brk{N}.
\end{equation}
\end{theorem}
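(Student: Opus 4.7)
The plan is to analyze \mainalg{} via the relaxation / random-playout framework for online learning. For each $t\in\crl*{0,1,\ldots,n}$ and observed losses $g_{1:t}$, define the relaxation
\[
\Rel_{t}(g_{1:t}) \ldef \En_{\sigma_{t+1:n}} \sup_{i \in \brk{N}}\brk*{- \sum_{s=1}^{t}\tri*{e_{i}, g_{s}} + 4\sum_{s=t+1}^{n}\sigma_{s}[i]c_{i} - B(i)},
\]
with the Rademacher sum empty when $t=n$. By construction, $p_{t}(\sigma_{t+1:n})$ is the per-round minimax minimizer associated with $\Rel_{t}$.

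The heart of the proof is the \emph{admissibility} inequality: for every $g_{1:t-1}$,
\[
\sup_{g_{t}:\,\abs{g_{t}[i]}\leq c_{i}}\En_{\sigma_{t+1:n}}\brk*{\tri*{p_{t}(\sigma_{t+1:n}),g_{t}} + \sup_{i}\brk*{-\sum_{s=1}^{t}g_{s}[i] + 4\sum_{s=t+1}^{n}\sigma_{s}[i]c_{i} - B(i)}} \leq \Rel_{t-1}(g_{1:t-1}).
\]
Using the optimality of $p_{t}(\sigma)$ bounds the left-hand side by $\En_{\sigma}\min_{p}\sup_{g_{t}}\brk*{\tri*{p,g_{t}} + \sup_{i}\brk*{\cdots}}$. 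Because the objective is bilinear in $(p,g_{t})$ over the compact convex sets $\Delta_{N}$ and $\prod_{i}\brk{-c_{i},c_{i}}$, Sion's minimax theorem rewrites the inner minimax as $\sup_{q}\brk*{\min_{i}\En_{g\sim q}g[i] + \En_{g}\sup_{i}\brk{-g[i] + \phi_{i}(\sigma)}}$, where $\phi_{i}(\sigma)$ collects the terms independent of $g$. The key symmetrization step introduces an iid ghost copy $g'\sim q$: the inequality $\min_{i}\En_{g'}g'[i] \leq \En_{g'}g'[i^{\star}(g,\sigma)]$ at the realized argmax $i^{\star}$ reduces the expression to $\En_{g,g'}\sup_{i}\brk{(g'[i]-g[i]) + \phi_{i}(\sigma)}$. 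Since $g-g'$ is symmetric with $\abs{g'[i]-g[i]}\leq 2c_{i}$ coordinate-wise, a Rademacher domination argument then upper-bounds this quantity by $\En_{\sigma_{t}}\sup_{i}\brk{4\sigma_{t}[i]c_{i} + \phi_{i}(\sigma)}$; absorbing the fresh $\sigma_{t}$ into the relaxation's Rademacher sequence yields $\Rel_{t-1}(g_{1:t-1})$. The constant $4$ in the algorithm absorbs one factor of $2$ from $\abs{g-g'}\leq 2c$ and a second factor of $2$ from the symmetrization.

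Given admissibility, I would iterate over $t=1,\ldots,n$ and telescope, combined with the terminal identity $\Rel_{n}(g_{1:n}) = \sup_{i}\brk{-\sum_{s}g_{s}[i] - B(i)}$, to obtain
\[
\En\brk*{\sum_{t=1}^{n}\tri*{e_{i_{t}}, g_{t}} + \sup_{i}\brk*{-\sum_{s=1}^{n}g_{s}[i] - B(i)}} \leq \Rel_{0} = \En_{\sigma}\sup_{i}\brk*{4c_{i}\sum_{s=1}^{n}\sigma_{s}[i] - B(i)}.
\]
For each $i$, $4c_{i}\sum_{s}\sigma_{s}[i]$ is a centered sub-Gaussian with parameter $16c_{i}^{2}n$, and the choice $B(i)=5c_{i}\sqrt{n\log(4c_{i}^{2}n/\pi_{i})}$ is tuned so that the simple bound $\En\sup_{i}Y_{i}\leq \sum_{i}\En(Y_{i})_{+}$, combined with a sub-Gaussian tail integral, gives $\Rel_{0}\leq O(1)$; the prior weights $\pi_{i}$ appearing in $B(i)$ are precisely what ensures the resulting series is finite. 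Lower-bounding the outer supremum in the previous display by $-\sum_{s}g_{s}[i] - B(i)$ for any fixed $i$ then yields the claimed $O\prn*{c_{i}\sqrt{n\log(nc_{i}/\pi_{i})}}$ regret bound.

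The main technical obstacle will be the symmetrization step in the admissibility proof: the coordinate-wise box $\prod_{i}\brk{-c_{i},c_{i}}$ is heterogeneous in $i$, so the standard Rademacher contraction for expert aggregation (which assumes a uniform scale) does not apply directly---one must carry the per-coordinate scales $2c_{i}$ through the argument, and the factor $4$ in the relaxation is exactly what this symmetrization demands. The remaining ingredients---the minimax swap, the telescoping, and the sub-Gaussian bound on $\Rel_{0}$---are standard applications of the relaxation machinery.
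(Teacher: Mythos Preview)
Your overall architecture matches the paper exactly: define the random-playout relaxation, verify the terminal identity, prove admissibility via a minimax swap and ghost-sample symmetrization, telescope, and finish by bounding $\Rel_{0}$ with a sub-Gaussian maximal inequality. The final-value argument and the telescoping are fine.

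The gap is in the step you call ``a Rademacher domination argument.'' After the ghost sample and the standard symmetrization (introduce a single Rademacher $\eps_t$ using $g'-g\stackrel{d}{=}g-g'$, split the $\sup_i$, recombine, pass to a vertex of $\prod_i\brk{-c_i,c_i}$), one does \emph{not} arrive at $\En_{\sigma_t}\sup_i\brk{4\sigma_t[i]c_i + \phi_i}$. One arrives at
\[
\sup_{\sigma_t\in\crl{\pm1}^{N}}\ \En_{\eps_t\in\crl{\pm1}}\ \max_{i}\brk*{\,\phi_i + 2\,\eps_t\,\sigma_t[i]\,c_i\,},
\]
i.e.\ a \emph{scalar} Rademacher $\eps_t$ shared across all coordinates, with a worst-case sign pattern $\sigma_t$ outside. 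Converting this perfectly correlated perturbation into \emph{independent} per-coordinate Rademachers $\sigma_t[i]$ at only a factor-$2$ cost is exactly the content of the paper's \pref{theorem:ftpl}, and it is not a standard contraction or domination fact. The paper proves it by first noting that for fixed $\sigma_t$ only two indices can realize the inner maximum (one for each sign of $\eps_t$), reducing to $N=2$, and then running a case analysis (\pref{lem:ftpl2}) on whether one of the two experts ``dominates'' the other.

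Your diagnosis of the obstacle as ``carrying the per-coordinate scales $2c_i$ through'' is not quite the point: heterogeneous $c_i$'s flow through the standard steps without trouble; the genuine difficulty is the passage from one shared sign $\eps_t$ to $N$ independent signs. Correspondingly, your accounting of the constant $4$ is off: one factor of $2$ comes from the sup-splitting step in the symmetrization, and the second comes from \pref{theorem:ftpl}, not from the crude bound $\abs{g-g'}\leq 2c$. Without this two-expert lemma (or an equivalent), the admissibility inequality is not established.
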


The proof of the theorem is deferred to \pref{app:proofs} in the supplementary material due to space constraints. Briefly, the proof follows the technique of adaptive relaxations from \citep{foster2015adaptive}. It relies on showing that the following function of the first $t$ loss vectors $g_{1:t}$ is an admissible relaxation (see \citep{foster2015adaptive} for definitions):
\begin{align*}
\Rel(g_{1:t})\defeq{}
\En_{\sigma_{t+1},\ldots, \sigma_{T}\in\pmo^{N}}\sup_{i}\brk*{
- \sum_{s=1}^{t}\tri*{e_i, g_{s}} + 4\sum_{s=t+1}^{T}\sigma_s[i]c_{i} - B(i)
}.
\end{align*}
This implies that if we play the strategy $(p_t)_{t\leq{}n}$ given by \pref{alg:general}, the regret to the $i$th expert is bounded by $B(i) + \Rel(\cdot{})$, where $\Rel(\cdot{})$ indicates the $\Rel$ function applied to an empty sequence of loss vectors. As a final step, we bound $\Rel(\cdot)$ as $O(1)$ using a probabilistic maximal inequality (\pref{lem:maximal} in the supplementary material), yielding the bound \pref{eq:ftpl_regret}. Compared to related FTPL algorithms \citep{rakhlin2012relax}, the analysis is surprisingly delicate, as additive $c_i$ factors can spoil the desired regret bound \pref{eq:ftpl_regret} if the $c_i$s differ by orders of magnitude.

The min-max optimization problem in \mainalg{} can be solved in polynomial-time using linear programming --- see \pref{app:ftpl} in the supplementary material for a full discussion.

In related work, \cite{bubeck2017online} simultaneously developed a multi-scale experts algorithm which could also be used in our framework. Their regret bound has sub-optimal dependence on the prior distribution over experts, but their algorithm is more efficient and is able to obtain multiplicative regret guarantees.

\subsection{Online convex optimization}
\label{sec:oco_slow}

One can readily apply \mainalg{} for online optimization problems whenever it is possible to bound the losses of the different experts a-priori. One such application is to online convex optimization, where each ``expert'' is a a particular OCO algorithm, and for which such a bound can be obtained via appropriate bounds on the relevant norms of the parameter vectors and the gradients of the loss functions. We detail this application --- which yields algorithms for parameter-free online learning and more --- below. All of the algorithms in this section are derived using a unified meta-algorithm strategy \ocoalg{}.

The setup is as follows. We have access to $N$ sub-algorithms, denoted $\textsc{Alg}_{i}$ for $i \in [N]$. In round $t$, each sub-algorithm $\textsc{Alg}_{i}$ produces a prediction $w_t^{i} \in \W_i$, where $\W_i$ is a set in a vector space $V$ over $\R$ containing $0$. Our meta-algorithm is then required to choose one of the predictions $w_t^{i}$. Then, a loss function $f_t: V \rightarrow \R$ is revealed, whereupon $\textsc{Alg}_{i}$ incurs loss $f_t(w_t^{i})$, and the meta-algorithm suffers the loss of the chosen prediction. We make the following assumption on the sub-algorithms:
\begin{assumption} \label{assumption:oco}
The sub-algorithms satisfy the following conditions:
\begin{itemize}
\item For each $i \in [N]$, there is an associated norm $\nrm{\cdot}_{(i)}$ such that $\sup_{w\in\W_{i}}\nrm*{w}_{(i)}\leq{}R_{i}$.
\item For each $i \in [N]$, the sequence of functions $f_t$ are $L_i$-Lipschitz on $\W_i$ with respect to $\nrm{\cdot}_{(i)}$.
\item For each sub-algorithm $\textsc{Alg}_{i}$, the iterates $(w_t^{i})_{t\leq{}n}$ enjoy a regret bound 
$\sum_{t=1}^{n}f_{t}(w_{t}^{i}) - \inf_{w\in\W_{i}}\sum_{t=1}^{n}f_{t}(w) \leq{} \mathbf{Reg}_{n}(i)$,
where $\Reg(i)$ may be data- or algorithm-dependent.
\end{itemize}
\end{assumption}

\begin{algorithm}[h]
\caption{}\label{alg:oco_aggregation}
{\small
\begin{algorithmic}
\Procedure{MultiScaleOCO}{$\crl*{\textsc{Alg}_{i}, R_{i},L_{i}}_{i \in [N]}$, $\pi$}
\Comment{Collection of sub-algorithms, prior $\pi$.}
\State $c\gets(R_{i}\cdot{}L_{i})_{i \in [N]}$\Comment{Sub-algorithm scale parameters.}
\For{$t=1,\ldots,n$}
\State $w_{t}^{i} \gets{} \textsc{Alg}_{i}(\tilde{f}_{1},\ldots, \tilde{f}_{t-1})$ for each $i\in\mc{\A}$.
\State $i_{t}\gets{}\textsc{MultiScaleFTPL}[c,\pi](g_{1},\ldots,g_{t-1})$.
\State Play $w_{t}=w_{t}^{i_{t}}$.
\State Observe loss function $f_{t}$ and let $\tilde{f}_{t}(w)=f_{t}(w)-f_{t}(0)$.
\State \label{line:clipping}$g_{t}\gets \prn*{\tilde{f}_{t}(w_{t}^{i})}_{i \in [N]}$.
\EndFor
\EndProcedure
\end{algorithmic}
}
\end{algorithm}

In most applications, $\W_i$ will be a convex set and $f_t$ a convex function; this convexity is not necessary to prove a regret bound for the meta-algorithm. We simply need boundedness of the set $\W_i$ and Lipschitzness of the functions $f_t$, as specified in \pref{assumption:oco}. This assumption implies that for any $i$, we have $|f_t(w) - f_t(0)| \leq R_iL_i$ for any $w \in \W_i$. Thus, we can design a meta-algorithm for this setup by using \mainalg{} with $c_i = R_iL_i$, which is precisely what is described in \pref{alg:oco_aggregation}. The following theorem provides a bound on the regret of \ocoalg; a direct consequence of \pref{theorem:ftpl_alg}.
\begin{theorem}
\label{thm:oco_aggregation} Without loss of generality, assume that $R_iL_i \geq 1$\footnote{For notational convenience all Lipschitz bounds are assumed to be at least $1$ without loss of generality for the remainder of the paper.}. Suppose that the inputs to \pref{alg:oco_aggregation} satisfy \pref{assumption:oco}. Then the iterates $(w_t)_{t\leq{}n}$ returned by \pref{alg:oco_aggregation} follow the regret bound
\begin{equation}
\label{eq:oco_regret}
\En\brk*{\sum_{t=1}^{n}f_{t}(w_{t}) - \inf_{w\in\W_{i}}\sum_{t=1}^{n}f_{t}(w)} \leq{} \En\brk*{\mathbf{Reg}_{n}(i)} + O\prn*{R_{i}L_{i}\sqrt{n\log\prn*{R_{i}L_{i}n/\pi_{i}}}
} \quad\forall{}i \in [N].
\end{equation}
\end{theorem}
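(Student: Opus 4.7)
The plan is to reduce directly to \pref{theorem:ftpl_alg}. First I would decompose the regret against the comparator in $\W_i$ through the $i$th sub-algorithm's trajectory:
\begin{align*}
\sum_{t=1}^{n}f_{t}(w_{t}) - \inf_{w\in\W_{i}}\sum_{t=1}^{n}f_{t}(w) = \underbrace{\sum_{t=1}^{n}\prn*{f_{t}(w_{t}^{i_t}) - f_{t}(w_{t}^{i})}}_{\text{meta-regret}} + \underbrace{\sum_{t=1}^{n}f_{t}(w_{t}^{i}) - \inf_{w\in\W_{i}}\sum_{t=1}^{n}f_{t}(w)}_{\leq{}\Reg(i)\text{ by \pref{assumption:oco}}},
\end{align*}
where the second term is deterministically at most $\Reg(i)$ and will contribute $\En\brk{\Reg(i)}$ to the final bound. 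It remains to handle the meta-regret.

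The key observation is that the losses fed into \mainalg{} are the centered quantities $g_t[i]=\tilde{f}_t(w_t^{i})=f_{t}(w_t^{i})-f_t(0)$, not the raw values $f_t(w_t^{i})$. By \pref{assumption:oco}, for each $i$ and each $t$,
\begin{equation*}
\abs*{g_t[i]} = \abs*{f_t(w_t^{i}) - f_t(0)} \leq{} L_i\nrm{w_t^{i}}_{(i)} \leq{} L_i R_i = c_i,
\end{equation*}
so the scale assumption required by \pref{theorem:ftpl_alg} is met (and $c_i\geq{}1$ by the hypothesis). Moreover, since the $w_t^i$ are chosen before $f_t$ is revealed, and since $g_t$ does not depend on the meta-algorithm's randomized choice $i_t$, this is exactly the full-information expert setting to which \pref{theorem:ftpl_alg} applies. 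Invoking that theorem with scale vector $c=(R_iL_i)_i$ and prior $\pi$ yields, for every $i\in\brk{N}$,
\begin{equation*}
\En\brk*{\sum_{t=1}^{n}\tri*{e_{i_t},g_t} - \sum_{t=1}^{n}\tri*{e_i,g_t}} \leq{} O\prn*{R_iL_i\sqrt{n\log(nR_iL_i/\pi_i)}}.
\end{equation*}

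To finish, I would simply unfold the definition of $g_t$: the offsets $f_t(0)$ cancel identically across the two terms, giving
\begin{equation*}
\En\brk*{\sum_{t=1}^{n}\prn*{f_t(w_t^{i_t}) - f_t(w_t^{i})}} = \En\brk*{\sum_{t=1}^{n}\prn*{g_t[i_t] - g_t[i]}} \leq{} O\prn*{R_iL_i\sqrt{n\log(nR_iL_i/\pi_i)}},
\end{equation*}
and combining with the sub-algorithm regret bound completes the proof. I do not expect a genuine technical obstacle here: the only nontrivial design choices—centering the losses by $f_t(0)$ so that the effective range is $R_iL_i$ rather than $\sup_t\abs{f_t(w_t^i)}$, and choosing $c_i=R_iL_i$ in \mainalg{}—are precisely what make the reduction go through, and these are already baked into \pref{alg:oco_aggregation}. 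Everything else is bookkeeping.
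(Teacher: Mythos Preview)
Your proposal is correct and follows essentially the same approach as the paper: verify $\abs{g_t[i]}\leq{}L_iR_i=c_i$ via Lipschitzness, invoke \pref{theorem:ftpl_alg}, use the cancellation of the $f_t(0)$ offsets, and combine with the sub-algorithm regret bound from \pref{assumption:oco}. The only cosmetic difference is that the paper works directly with $\tilde{f}_t$ throughout and converts back to $f_t$ at the end, whereas you decompose the regret first and then cancel the offsets---the content is the same.
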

\pref{thm:oco_aggregation} shows that if we use \pref{alg:oco_aggregation} to aggregate the iterates produced by a collection of sub-algorithms $(\textsc{Alg}_{i})_{i\in{\brk{N}}}$, the regret against any sub-algorithm $i$ will only depend on that algorithm's scale, not the regret of the worst sub-algorithm.

\paragraph{Application 1: Parameter-free online learning in uniformly convex Banach spaces.} As the first application of our framework, we give a generalization of the parameter-free online learning bounds found in \cite{mcmahan2013minimax,mcmahan2014unconstrained, orabona2014simultaneous, orabona2016coin, cutkosky2016online} from Hilbert spaces to arbitrary uniformly convex Banach spaces.  Recall that a Banach space $(\Bspace, \nrm{\cdot})$ is $(2,\lambda)$-uniformly convex if $\frac{1}{2}\nrm*{\cdot}^{2}$ is $\lambda$-strongly convex with respect to itself \citep{pisier2011martingales}. Our algorithm obtains a generalization of the oracle inequality \pref{eq:oco_hilbert} for any uniformly convex $(\Bspace, \nrm*{\cdot})$ by running multiple instances of \emph{Mirror Descent} --- the workhorse of online convex optimization --- and aggregating their iterates using \ocoalg{}. This strategy is thus efficient whenever Mirror Descent can be implemented efficiently. The collection of sub-algorithms used by \ocoalg{}, which was alluded to at the beginning of this section is as follows: For each $1\leq{} i \leq{} N\ldef n+1$, set $R_{i}=e^{i-1}$, $L_{i}=L$, $\mc{W}_{i}=\crl*{w\in\Bspace\mid{} \nrm*{w}\leq{}R_{i}}$, $\eta_{i}=\frac{R_{i}}{L}\sqrt{\frac{\lambda{}}{n}}$, and $\textsc{Alg}_{i}= \textsc{MirrorDescent}(\eta_{i}, \mc{W}_{i}, \nrm*{\cdot}^{2})$. Finally,  set $\pi=\mathrm{Uniform}(\brk*{n+1})$.

Mirror Descent is reviewed in detail in \pref{app:proofs_oco} in the supplementary material, but the only feature of its performance of importance to our analysis is that, when configured as described above, the iterates $(w_t^i)_{t\leq{}n}$ produced by $\textsc{Alg}_{i}$ specified above will satisfy 
$\sum_{t=1}^{n}f_{t}(w_t^{i}) - \inf_{w\in\mc{W}_{i}}\sum_{t=1}^{n}f_{t}(w) \leq{} O(R_{i}L\sqrt{\lambda{}n})$
on any sequence of losses that are $L$-Lipschitz with respect to $\nrm{\cdot}_{\star}$. Using just this simple fact, combined with the regret bound for \ocoalg{} and a few technical details in \pref{app:proofs_oco}, we can deduce the following parameter-free learning oracle inequality:
\begin{theorem}[Oracle inequality for uniformly convex Banach spaces]
\label{thm:oco_2smooth}
The iterates $(w_{t})_{t\leq{}n}$ produced by \ocoalg{} on any $L$-Lipschitz (w.r.t. $\nrm*{\cdot}_{\star}$) sequence of losses $(f_{t})_{t\leq{}n}$ satisfy
\begin{equation}
\label{eq:oco_2smooth_general}
\En\brk*{\sum_{t=1}^{n}f_{t}(w_{t}) - \sum_{t=1}^{n}f_{t}(w)} \leq{} O\prn*{L\cdot{}(\nrm{w}+1)\sqrt{n\cdot{}\log\prn*{(\nrm{w}+1)Ln}/\lambda{}}} \quad\forall{}w\in\Bspace.
\end{equation}
\end{theorem}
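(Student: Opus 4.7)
The plan is to apply \pref{thm:oco_aggregation} to the prescribed collection of Mirror Descent instances, then compare against the sub-algorithm whose radius matches $\nrm{w}$. First, I invoke the textbook Mirror Descent guarantee: because $(\Bspace,\nrm{\cdot})$ is $(2,\lambda)$-uniformly convex, the regularizer $\tfrac{1}{2}\nrm{\cdot}^{2}$ is $\lambda$-strongly convex w.r.t. $\nrm{\cdot}$, so with step size $\eta_i=(R_i/L)\sqrt{\lambda/n}$ the iterates $(w_t^i)$ of $\textsc{Alg}_i$ satisfy $\sum_{t=1}^{n} f_t(w_t^i)-\inf_{w\in\W_i}\sum_{t=1}^{n} f_t(w)\leq O(R_i L\sqrt{n/\lambda})$ on any $L$-Lipschitz (w.r.t.\ $\nrm{\cdot}_\star$) loss sequence. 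This is the only property of Mirror Descent needed for the argument.

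Second, I substitute this regret bound into \pref{thm:oco_aggregation}, using $L_i=L$, $c_i=R_i L\geq 1$, and $\pi_i=1/(n+1)$. The aggregation cost $O(R_i L\sqrt{n\log(R_i L n/\pi_i)})$ collapses to $O(R_i L\sqrt{n\log(R_i L n)})$ after absorbing $\log(n+1)$, and combining with the Mirror Descent term and using $\lambda\leq 1$ to merge the two square roots yields, for every $i\in[N]$,
\[
\En\brk*{\sum_{t=1}^{n} f_t(w_t)-\inf_{w\in\W_i}\sum_{t=1}^{n} f_t(w)}\leq O\!\prn*{R_i L\sqrt{n\log(R_i L n)/\lambda}}.
\]

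Third, for an arbitrary comparator $w\in\Bspace$ with $\nrm{w}\leq e^{n}=R_N$, I pick $i^\star$ to be the smallest index with $R_{i^\star}\geq \max(1,\nrm{w})$; the geometric grid $R_i=e^{i-1}$ guarantees $R_{i^\star}\leq e(\nrm{w}+1)$. Since $w\in\W_{i^\star}$, we have $\inf_{w'\in\W_{i^\star}}\sum_t f_t(w')\leq \sum_t f_t(w)$, and substituting $R_{i^\star}=O(\nrm{w}+1)$ into the per-expert bound above gives precisely \pref{eq:oco_2smooth_general}.

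The only case requiring separate attention is $\nrm{w}>R_N=e^{n}$, which I would handle by a trivial Lipschitz argument: since $\nrm{w_t}\leq R_N<\nrm{w}$, the regret is at most $\sum_{t=1}^{n} L\nrm{w_t-w}\leq 2nL\nrm{w}$, and the hypothesis $\log(\nrm{w}+1)\geq n\geq n\lambda$ is enough to verify that $2nL\nrm{w}=O(L\nrm{w}\sqrt{n\log((\nrm{w}+1)Ln)/\lambda})$. The expected ``main obstacle'' is really just cosmetic: making sure that truncating at $N=n+1$ experts is harmless (the above trivial bound takes care of this), and that the doubling ratio $e$ together with the uniform prior contributes only log factors that are absorbed into $\log((\nrm{w}+1)Ln)$ in the final expression.
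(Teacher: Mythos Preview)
Your proposal is correct and follows the paper's argument essentially step for step: invoke the Mirror Descent guarantee for each $\textsc{Alg}_i$, plug into \pref{thm:oco_aggregation} with the uniform prior, and then pick the sub-algorithm whose radius $R_{i^\star}$ is within a factor of $e$ of $\max(1,\nrm{w})$. The merging of the $\sqrt{n/\lambda}$ and $\sqrt{n\log(R_iLn)}$ terms using $\lambda\le 1$ is exactly what the paper does (it writes ``combining terms'').

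The one genuine difference is your treatment of the case $\nrm{w}>e^{n}$. The paper handles this via a general lemma (\pref{lem:convex_restricted} and \pref{corr:conjugate_bound}) which shows, for any bound function $F(r)=c(r+1)\log(r+1)^{\gamma}$, that the infimum $\inf_{w}\{\sum_t f_t(w)+F(\nrm{w})\}$ is already attained at some $\nrm{w}\le \exp((Ln/c)^{1/\gamma})$; instantiating with $c=L\sqrt{n}$ and $\gamma=1/2$ gives the cutoff $e^{n}$. You instead bound the actual regret directly: $\sum_t f_t(w_t)-f_t(w)\le \sum_t L\nrm{w_t-w}\le 2nL\nrm{w}$, and then observe that $\nrm{w}>e^{n}$ forces $\log(\nrm{w}+1)\ge n\ge n\lambda$, so $2nL\nrm{w}$ is dominated by the right-hand side of \pref{eq:oco_2smooth_general}. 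Your argument is correct (it implicitly uses $\lambda\le 1$, which always holds for $(2,\lambda)$-uniformly convex spaces by evaluating the strong convexity inequality at the origin) and is more elementary for this particular theorem; the paper's route has the advantage that \pref{lem:convex_restricted} is stated abstractly and can be reused for other oracle-inequality shapes.
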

Note that the above oracle inequality applies for \textbf{any uniformly convex norm $\nrm{\cdot}$}. 
Previous results only obtain bounds of this form efficiently when $\nrm{\cdot}$ is a Hilbert space norm or $\ls_{1}$. As is standard for such oracle inequality results, the bound is weaker than the optimal bound if $\nrm{w}$ were selected in advance, but only by a mild $\sqrt{\log\prn*{(\nrm{w}+1)Ln}}$ factor.
\begin{proposition}
\label{prop:oco_2smooth_runtime}

The algorithm can be implemented in time $O(T_{\mathrm{MD}}\cdot{}\mathrm{poly}(n))$ per iteration, where $T_{\mathrm{MD}}$ is the time complexity of a single Mirror Descent update. 

\end{proposition}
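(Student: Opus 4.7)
The plan is to decompose the per-iteration cost of the algorithm instantiation in \pref{thm:oco_2smooth} into two pieces: (i) running the $N = n+1$ Mirror Descent sub-algorithms $\textsc{Alg}_1,\ldots,\textsc{Alg}_{n+1}$, and (ii) invoking \mainalg{} as a meta-aggregator over these sub-algorithms. Piece (i) is immediate: there are $n+1$ MD instances, each sees one loss per round, and each performs a single Mirror Descent update, so the cost is $(n+1)\cdot T_{\mathrm{MD}} = O(n \cdot T_{\mathrm{MD}})$ per round. Evaluating $\tilde{f}_t$ at the $N$ iterates $w_t^i$ to form the loss vector $g_t$ in \pref{alg:oco_aggregation} adds at most another $O(n)$ oracle calls to $f_t$, which we absorb into $T_{\mathrm{MD}}$ (a single MD update already queries $f_t$ once).

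For piece (ii), I would appeal to the polynomial-time implementability of \mainalg{}, which is asserted in the paragraph immediately following \pref{theorem:ftpl_alg} and discussed in full in the referenced supplementary appendix. Specifically, one round of \mainalg{} requires: (a) sampling one set of sign vectors $\sigma_{t+1:n}\in\pmo^N$, which costs $O(nN)$; and (b) solving the min-max program that defines $p_t(\sigma_{t+1:n})$. After conditioning on the sampled $\sigma_{t+1:n}$, the inner supremum over $g_t\in\prod_i[-c_i,c_i]$ is a maximum of a convex piecewise-linear function, attained at one of finitely many extreme points, and the outer minimization over $p\in\Delta_N$ can be cast as a linear program with $\mathrm{poly}(N)=\mathrm{poly}(n)$ variables and constraints. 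Sampling $i_t\sim p_t$ then costs $O(N)$.

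Adding piece (i) and piece (ii) gives a total per-iteration cost of $O(n\cdot T_{\mathrm{MD}}) + \mathrm{poly}(n) = O(T_{\mathrm{MD}} \cdot \mathrm{poly}(n))$, as claimed. The main potential obstacle is justifying step (b) carefully, namely that the min-max problem in the definition of $p_t$ reduces to a polynomial-size LP once the sign vectors are fixed; the rest of the argument is essentially bookkeeping. Since the supplementary appendix explicitly develops this LP formulation, the only content specific to \pref{prop:oco_2smooth_runtime} is the observation that the number of sub-algorithms $N$ grows only linearly in $n$, so the polynomial overhead of \mainalg{} indeed stays polynomial in $n$.
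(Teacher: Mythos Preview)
Your proposal is correct and follows essentially the same approach as the paper. The paper does not give a standalone proof of \pref{prop:oco_2smooth_runtime}; it relies on precisely the two ingredients you identify, namely that the instantiation uses $N=n+1$ Mirror Descent sub-algorithms and that the per-round min-max in \mainalg{} reduces (after fixing the sampled signs) to a linear program of size $\mathrm{poly}(N)$, as worked out in the ``Computational efficiency'' paragraph at the end of the proof of \pref{theorem:ftpl_alg}.
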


In the example above, the $(2,\lambda)$-uniform convexity condition was mainly chosen for familiarity. The result can easily be generalized to related notions such as $q$-uniform convexity (see \cite{srebro2011universality}). More generally, the approach can be used to derive oracle inequalities with respect to general strongly convex regularizer $\mc{R}$ defined over the space $\mc{W}$. Such a bound would have the form $O\prn*{L\cdot{}\sqrt{n(\mc{R}(w)+1)\cdot{}\log\prn*{(\mc{R}(w)+1)n}}}$ for typical choices of $\mc{R}$.

This example captures well-known \emph{quantile bounds} \citep{koolen2015second} when one takes $\mc{R}$ to be the KL-divergence and $\mc{W}$ to be the simplex, or, in the matrix case, takes $\mc{R}$ to be the quantum relative entropy and $\mc{W}$ to be the set of density matrices, as in \cite{HazKalSha12}.

\paragraph{Application 2: Oracle inequality for many $\ell_p$ norms.} 
It is instructive to think of \ocoalg{} as executing a (scale-sensitive) online analogue of the structural risk minimization principle. We simply specify a set of subclasses and a prior $\pi$ specifying the importance of each subclass, and we are guaranteed that the algorithm's performance matches that of each sub-class, plus a penalty depending on the prior weight placed on that subclass. The advantage of this approach is that the nested structure used in the \pref{thm:oco_2smooth} is completely inessential. This leads to the exciting prospect of developing parameter-free algorithms over new and exotic set systems. One such example is given now: The \ocoalg{} framework allows us to obtain an oracle inequality with respect to \emph{many $\ls_{p}$ norms in $\R^{d}$ simultaneously}. To the best of our knowledge all previous works on parameter-free online learning have only provided oracle inequalities for a single norm.

\begin{theorem}
\label{thm:all_lp}
Fix $\delta > 0$. Suppose that the loss functions $(f_t)_{t\leq{}n}$ are $L_{p}$-Lipschitz w.r.t. $\nrm*{\cdot}_{p'}$ for each $p \in[1+\delta,2]$. Then there is a computationally efficient algorithm that guarantees regret
\begin{equation}
\En\brk*{\sum_{t=1}^{n}f_{t}(w_{t}) - \sum_{t=1}^{n}f_{t}(w)} \leq{}
O\prn*{(\nrm{w}_{p}+1)L_{p}\sqrt{n\log\prn*{(\nrm{w}_{p}+1)L_{p}\log(d)n}/(p-1)}}\quad\forall{}w \in \R^{d},\forall{}p\in[1+\delta,2].
\end{equation}

\end{theorem}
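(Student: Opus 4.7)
The plan is to invoke \ocoalg{} with a doubly-indexed family of Mirror Descent sub-algorithms, one per discretized exponent $p_k$ and dyadic norm radius $R_j$. Fix a grid $\{p_1,\ldots,p_K\} \subset [1+\delta, 2]$ of cardinality $K = O(\log d)$, spaced by $\Theta(1/\log d)$ in $1/p$, and set $R_j = e^{j-1}$ for $j \in [n+1]$. For each pair $(k,j)$, let $\mc{W}_{k,j} = \{w \in \R^d : \|w\|_{p_k} \leq R_j\}$, and let $\textsc{Alg}_{k,j}$ be Mirror Descent on $\mc{W}_{k,j}$ with regularizer $\tfrac{1}{2}\|\cdot\|_{p_k}^2$ and step size tuned to $L_{p_k}$ and $R_j$. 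Feed this collection into \ocoalg{} with scale parameters $c_{k,j} = R_j L_{p_k}$ and uniform prior $\pi_{k,j} = 1/(K(n+1))$.

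To verify the hypotheses of \pref{thm:oco_aggregation}, note that the ``$L_{p_k}$-Lipschitz w.r.t.\ $\|\cdot\|_{p_k'}$'' condition is given directly by the theorem's assumption since $p_k \in [1+\delta, 2]$. The standard fact that $\tfrac{1}{2}\|\cdot\|_{p_k}^2$ is $(p_k - 1)$-strongly convex w.r.t.\ $\|\cdot\|_{p_k}$ ensures that Mirror Descent on $\mc{W}_{k,j}$ achieves regret $\mathbf{Reg}_n(k,j) \leq O\!\prn*{R_j L_{p_k}\sqrt{n/(p_k - 1)}}$ against any comparator in $\mc{W}_{k,j}$. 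Applying \pref{thm:oco_aggregation} then yields, for every $(k,j)$,
\[
\En\brk*{\sum_{t=1}^n f_t(w_t) - \inf_{w \in \mc{W}_{k,j}}\sum_{t=1}^n f_t(w)} \leq O\!\prn*{R_j L_{p_k}\sqrt{n/(p_k - 1)} + R_j L_{p_k}\sqrt{n\log(R_j L_{p_k} n / \pi_{k,j})}}.
\]

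To obtain the oracle inequality, given $p \in [1+\delta, 2]$ and $w \in \R^d$, pick $p_{k^*}$ to be the smallest grid point $\geq p$ and $j^*$ to be the smallest index with $R_{j^*} \geq \|w\|_{p_{k^*}}$. Monotonicity of $\ell_q$-norms in $\R^d$ together with $p_{k^*} \geq p$ yields $\|w\|_{p_{k^*}} \leq \|w\|_p$; the dual inequality $p_{k^*}' \leq p'$ gives $\|\cdot\|_{p'} \leq \|\cdot\|_{p_{k^*}'}$, so that ``$L_p$-Lipschitz w.r.t.\ $\|\cdot\|_{p'}$'' entails ``$L_p$-Lipschitz w.r.t.\ $\|\cdot\|_{p_{k^*}'}$'', and hence $L_{p_{k^*}} \leq L_p$; and $p_{k^*} - 1 \geq p - 1$ is immediate. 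Combined with $R_{j^*} \leq e(\|w\|_p + 1)$ and $1/\pi_{k^*, j^*} = O(n\log d)$, and using $1/(p-1) \geq 1$ to combine the two $\sqrt{\cdot}$ terms, this gives the claimed inequality.

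The main obstacle is arranging the grid so that for every target pair $(p, w)$, a single grid point $p_{k^*}$ simultaneously controls the comparator norm $\|w\|_{p_{k^*}}$, the Lipschitz constant $L_{p_{k^*}}$, and the strong-convexity factor $p_{k^*} - 1$. Choosing $p_{k^*} \geq p$ neutralizes all three without any dimension-dependent blow-up by monotonicity of $\ell_q$-norms in $\R^d$, and the resolution $\Theta(1/\log d)$ in $1/p$ gives $K = \Theta(\log d)$ grid points, which is precisely what produces the $\log(d)$ factor appearing inside the outer logarithm of the final bound.
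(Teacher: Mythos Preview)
Your construction and overall strategy match the paper's proof almost exactly: the same $O(\log d)\times(n+1)$ grid of Mirror Descent sub-algorithms indexed by $(p_k,R_j)$, the same uniform prior, and the same appeal to \pref{thm:oco_aggregation}. The only substantive difference is which grid point you associate to a target $p$: the paper takes the largest $p_k\le p$, whereas you take the smallest $p_{k^*}\ge p$.

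Your claim that choosing $p_{k^*}\ge p$ makes all three comparison quantities go the right way simultaneously is the one genuine gap. The Mirror Descent bound you invoke (regularizer $\tfrac{1}{2}\|\cdot\|_{p_k}^2$, $(p_k-1)$-strongly convex w.r.t.\ $\|\cdot\|_{p_k}$) requires the \emph{gradients} to be bounded in the dual norm $\|\cdot\|_{p_k'}$; that is what $L_{p_k}$ must control for $\mathbf{Reg}_n(k,j)\le O(R_jL_{p_k}\sqrt{n/(p_k-1)})$ to hold. With $p_{k^*}\ge p$ we have $p_{k^*}'\le p'$ and hence $\|\nabla f_t\|_{p_{k^*}'}\ge\|\nabla f_t\|_{p'}$, so in this sense $L_{p_{k^*}}\ge L_p$, not $\le$. (Your implication ``$L_p$-Lipschitz w.r.t.\ $\|\cdot\|_{p'}$ $\Rightarrow$ $L_p$-Lipschitz w.r.t.\ $\|\cdot\|_{p_{k^*}'}$'' is correct under the literal reading of Lipschitz, but that reading bounds $\|\nabla f_t\|_{p_k}$, which is not what Mirror Descent with this regularizer uses.) The fix is immediate and is exactly what the grid resolution buys: $L_{p_{k^*}}/L_p\le d^{\,1/p_{k^*}'-1/p'}=d^{\,1/p-1/p_{k^*}}=d^{O(1/\log d)}=O(1)$. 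The paper's choice $p_k\le p$ gets $L_{p_k}\le L_p$ for free but then needs the same $d^{O(1/\log d)}=O(1)$ ratio argument for $\|w\|_{p_k}/\|w\|_p$; either direction works, but neither avoids the ratio step, contrary to your final paragraph.
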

The configuration in the above theorem is described in full in \pref{app:proofs_oco} in the supplementary material. This strategy can be trivially extended to handle $p$ in the range $(2,\infty)$. The inequality holds for $p\geq{}1+\delta$ rather than for  $p\geq{}1$ because the $\ls_1$ norm is not uniformly convex, but this is easily rectified by changing the regularizer at $p=1$; we omit this for simplicity of presentation.

We emphasize that the choice of $\ls_p$ norms for the result above was somewhat arbitrary --- any finite collection of norms will also work. For example, the strategy can also be applied to matrix optimization over $\R^{d\times{}d}$ by replacing the $\ls_p$ norm with the Schatten $S_p$ norm. The Schatten $S_p$ norm has strong convexity parameter on the order of $p-1$ (which matches the $\ls_p$ norm up to absolute constants \citep{ball1994sharp}) so the only change to practical change to the setup in \pref{thm:all_lp} will be the running time $T_{\text{MD}}$. Likewise, the approach applies to $(p,q)$-group norms as used in multi-task learning \citep{kakade2012regularization}.

\paragraph{Application 3: Adapting to rank for online PCA}

For the online PCA task, the learner predicts from a class $\mc{W}_{k} = \crl*{W\in\R^{d\times{}d}\mid{} W\succeq{}0, \nrm*{W}_{\sigma}\leq{}1,\tri*{W,I}=k}$. For a fixed value of $k$, such a class is a convex relaxation of the set of all rank $k$ projection matrices. After producing a prediction $W_{t}$, we experience affine loss functions $f_{t}(W_t)=\tri*{I-W_{t},Y_t}$, where $Y_{t}\in\mc{Y}:=\crl*{Y\in\R^{d\times{}d}\mid{}Y\succeq{}0, \nrm*{Y}_{\sigma}\leq{}1}$. \\
We leverage an analysis of online PCA due to \citep{nie2013online} together with \ocoalg{} to derive an algorithm that competes with many values of the rank simultaneously. This gives the following result:
\begin{theorem}
\label{thm:pca}
 There is an efficient algorithm for Online PCA with regret bound
{\small
\[
\En\brk*{\sum_{t=1}^{n}\tri*{I-W_{t},Y_{t}} - \min_{\substack{W\;\mathrm{ projection}\\\mathrm{rank}(W)=k}}\sum_{t=1}^{n}\tri*{I-W,Y_{t}}} \leq{} \wt{O}\prn*{k\sqrt{n}}\quad \forall{}k\in\brk{d/2}.
\]}
\end{theorem}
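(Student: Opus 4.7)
The plan is to instantiate \ocoalg{} with one sub-algorithm per candidate rank $k\in\brk{d/2}$. For each $k$, let sub-algorithm $\textsc{Alg}_k$ be the matrix multiplicative weights / matrix exponentiated gradient procedure of \cite{nie2013online} specialized to the Fantope
\[
\W_k = \crl*{W\in\R^{d\times d}\mid{} W\succeq 0,\ \nrm{W}_{\sigma}\leq 1,\ \tri{W,I}=k}.
\]
Because $\W_k$ is the convex hull of rank-$k$ projections and $f_t(W)=\tri{I-W,Y_t}$ is linear in $W$, we have $\inf_{W\in\W_k}\sum_{t}f_t(W)=\min_{W\text{ projection, rank }k}\sum_t f_t(W)$, so a regret bound against $\W_k$ is exactly what the theorem asks for.

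Next, I would identify the scale parameters required by \pref{assumption:oco}. Equip sub-algorithm $k$ with the trace norm $\nrm{\cdot}_\Sigma$, whose dual is the spectral norm $\nrm{\cdot}_\sigma$. Every $W\in\W_k$ is PSD with $\tri{W,I}=k$, so $\nrm{W}_\Sigma=k$ and we may take $R_k=k$. Since $Y_t\succeq 0$ with $\nrm{Y_t}_\sigma\leq 1$, the linear loss $W\mapsto\tri{I-W,Y_t}$ is $1$-Lipschitz in $\nrm{\cdot}_\Sigma$, so $L_k=1$ and the multi-scale parameter is $c_k=R_kL_k=k$. The analysis of \cite{nie2013online} then yields an in-class regret bound $\Reg(k)=\wt O\bigl(\sqrt{kn}\bigr)$ on the sequence $(\tilde f_t)$ produced by \pref{alg:oco_aggregation}; this is the only ``black-box'' input we need about the sub-algorithms.

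With these ingredients in hand, I would apply \pref{thm:oco_aggregation} using the uniform prior $\pi_k=2/d$ over $k\in\brk{d/2}$. The theorem gives, for every $k\in\brk{d/2}$,
\[
\En\brk*{\sum_{t=1}^{n}\tri*{I-W_t,Y_t} - \min_{W\in\W_k}\sum_{t=1}^{n}\tri*{I-W,Y_t}}
\leq \En\brk*{\Reg(k)} + O\bigl(k\sqrt{n\log(knd)}\bigr) = \wt O\bigl(k\sqrt{n}\bigr),
\]
which is exactly the claimed bound after invoking the Fantope equivalence from the first paragraph.

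The main obstacle is not algorithmic but conceptual: ensuring that the multi-scale aggregation penalty does not dominate. A naive aggregation of the $d/2$ sub-algorithms using standard experts machinery would pay a penalty scaling with $\max_k c_k = d/2$, destroying the bound; the whole point of \mainalg{} is that its expert-$k$ penalty scales with $c_k=k$ rather than $\max_k c_k$, which is precisely what makes the target $\wt O(k\sqrt{n})$ achievable. The only other detail requiring a bit of care is verifying that the regret guarantee for $\textsc{Alg}_k$ survives the transformation $f_t\mapsto\tilde f_t=f_t-f_t(0)$ inside \pref{alg:oco_aggregation}; this is immediate because the shift is a constant on each round and leaves the linear-in-$W$ structure, hence the matrix multiplicative weights analysis, intact.
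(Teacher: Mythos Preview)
Your proof is correct in outline and follows the same template as the paper: instantiate \ocoalg{} with the \cite{nie2013online} matrix exponentiated gradient algorithm on each Fantope, verify the scale conditions of \pref{assumption:oco}, and invoke \pref{thm:oco_aggregation}. There are two points of departure worth noting.

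First, you create one sub-algorithm for each rank $k\in\brk{d/2}$, for a total of $d/2$ experts with uniform prior $\pi_k=2/d$, whereas the paper uses only $\ceil{\log(d/2)}+1$ sub-algorithms with trace parameters $R_i=e^{i-1}$ spaced geometrically, then argues that for any target rank $k$ there is some $R_i$ within a constant factor $e$ of $k$. Your configuration is more direct and sidesteps the mild awkwardness that $e^{i-1}$ is not an integer (so the paper's equality constraint $\tri{W,I}=R_i$ does not literally contain rank-$k$ projections); the paper's configuration buys computational efficiency, since \mainalg{} runs in time polynomial in the number of experts and $O(\log d)$ experts is cheaper than $d/2$.

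Second, a minor slip: the in-class regret bound from \cite{nie2013online} for fixed $k\leq d/2$ is $\wt O(k\sqrt{n})$ (from $O(\sqrt{nk^{2}\log(n/k)})$), not $\wt O(\sqrt{kn})$ as you wrote. This is harmless for your conclusion, since the aggregation penalty $O(k\sqrt{n\log(knd)})$ already matches the target $\wt O(k\sqrt{n})$ and would dominate either way, but you should state the sub-algorithm regret correctly.
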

For a fixed value of $k$, the above bound is already optimal up to log factors, but it holds for all $k$ simultaneously.

\paragraph{Application 4: Adapting to norm for Matrix Multiplicative Weights}
In the \textsc{Matrix Multiplicative Weights} setting \citep{arora2012multiplicative} we consider hypothesis classes of the form 
$\mc{W}_{r}=\crl*{W\in\R^{d\times{}d}\mid{} W\succeq{}0, \nrm{W}_{\Sigma}\leq{}r}$. Losses are given by $f_t(W)=\tri*{W,Y_t}$, where $\nrm*{Y_t}_{\sigma}\leq{}1$. For a fixed value of $r$, the well-known \textsc{Matrix Multiplicative Weights} strategy has regret against $\mc{W}_r$ bounded by $O(r\sqrt{n\log{}d})$. Using this strategy for fixed $r$ as a sub-algorithm for \ocoalg{}, we achieve the following oracle inequality efficiently:
\begin{theorem}
\label{thm:mmw}
There is an efficient matrix prediction strategy with regret bound
\begin{equation}
\En\brk*{\sum_{t=1}^{n}\tri*{W_{t},Y_{t}} - \sum_{t=1}^{n}\tri*{W,Y_{t}}} \leq{} (\nrm*{W}_{\Sigma}+1)\sqrt{n\log{}d\log((\nrm*{W}_{\Sigma}+1)n)})\quad\forall{}W\succeq{}0.
\end{equation}
\end{theorem}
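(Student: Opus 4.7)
The plan is to mirror the argument of \pref{thm:oco_2smooth}: instantiate \ocoalg{} with a geometric grid of Matrix Multiplicative Weights sub-algorithms, one per trace-norm scale, then optimize over the sub-algorithm index $i$ for each comparator $W$.

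Concretely, I would set $N=n+1$ and, for each $i\in\brk{N}$, define $R_{i}=e^{i-1}$, associated norm $\nrm*{\cdot}_{(i)}=\nrm*{\cdot}_{\Sigma}$, and set $\mc{W}_{i}=\crl*{W\succeq 0: \nrm*{W}_{\Sigma}\leq R_{i}}$. The losses $f_{t}(W)=\tri*{W,Y_{t}}$ obey $|f_{t}(W)-f_{t}(W')|\leq \nrm*{W-W'}_{\Sigma}\nrm*{Y_{t}}_{\sigma}\leq \nrm*{W-W'}_{\Sigma}$ by trace/spectral-norm duality, so they are $L_{i}=1$-Lipschitz on $\mc{W}_{i}$, verifying \pref{assumption:oco}. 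Taking $\textsc{Alg}_{i}$ to be Matrix Multiplicative Weights tuned to radius $R_{i}$, the standard analysis of \citep{arora2012multiplicative} gives $\mathbf{Reg}_{n}(i)=O(R_{i}\sqrt{n\log d})$ against $\mc{W}_{i}$.

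With uniform prior $\pi_{i}=1/(n+1)$ and scales $c_{i}=R_{i}L_{i}=R_{i}$, applying \pref{thm:oco_aggregation} yields, for every $i\in\brk{N}$,
\begin{equation*}
\En\brk*{\sum_{t=1}^{n}f_{t}(W_{t})-\inf_{W\in\mc{W}_{i}}\sum_{t=1}^{n}f_{t}(W)}\leq O\prn*{R_{i}\sqrt{n\log d}}+O\prn*{R_{i}\sqrt{n\log(R_{i}n^{2})}}.
\end{equation*}
Given any $W\succeq 0$ with $\nrm*{W}_{\Sigma}\leq e^{n}$, I would choose $i^{\star}$ to be the least index with $R_{i^{\star}}\geq \nrm*{W}_{\Sigma}+1$, so that $W\in \mc{W}_{i^{\star}}$ and $R_{i^{\star}}\leq e(\nrm*{W}_{\Sigma}+1)$. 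Plugging in and using $\sqrt{a}+\sqrt{b}\leq 2\sqrt{ab}$ for $a,b\geq 1$ (WLOG $\log d\geq 1$, else the bound is trivial) collapses the two square-root terms into a single $\sqrt{\log d\cdot\log((\nrm*{W}_{\Sigma}+1)n)}$ factor, matching the theorem. The edge case $\nrm*{W}_{\Sigma}>e^{n}$ is absorbed by noting the stated bound is already $\Omega(n\nrm*{W}_{\Sigma})$ there, which dominates the trivial regret.

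The argument is essentially mechanical once \pref{thm:oco_aggregation} is in hand. The only subtlety, and the point I would be most careful about, is combining the sub-algorithm's $\sqrt{n\log d}$ regret with the aggregation penalty's $\sqrt{n\log(R_{i}n^{2})}$ term into the product-of-logs form claimed in the theorem; this is exactly what the $\sqrt{a}+\sqrt{b}\leq 2\sqrt{ab}$ step accomplishes.
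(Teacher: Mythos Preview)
Your proposal is correct and follows essentially the same route as the paper: instantiate \ocoalg{} with a geometric grid of \textsc{Matrix Multiplicative Weights} sub-algorithms (the paper uses base $2$ rather than base $e$, which is immaterial), uniform prior over $n+1$ indices, verify the Lipschitz/loss bound via trace/spectral duality, and then pick $i^{\star}$ adapted to $\nrm*{W}_{\Sigma}$. The paper's proof is in fact terser than yours---it simply lists the configuration and defers to ``the doubling analysis given in the proof of \pref{thm:oco_2smooth}''---so your explicit $\sqrt{a}+\sqrt{b}\leq 2\sqrt{ab}$ step to merge the $\sqrt{\log d}$ and $\sqrt{\log(R_{i}n/\pi_i)}$ terms into the product-of-logs form is a welcome clarification rather than a deviation.
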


\paragraph{A remark on efficiency} 
All of our algorithms that provide bounds of the form \pref{eq:oco_2smooth_general} instantiate $O(n)$ experts with \mainalg{} because, in general, the worst case $w$ for achieving $\pref{eq:oco_2smooth_general}$ can have norm as large as $e^n$. If one has an a priori bound --- say $B$ --- on the range at which each $f_{t}$ attains its minimum, then the number of experts be reduced to $O(\log(B))$.

\subsection{Supervised learning}
\label{sec:supervised_slow}
We now consider the online supervised learning setting (\pref{proto:supervised_learning}), with the goal being to compete with a sequence of hypothesis classes $(\mc{F}_k)_{k\in\brk{N}}$ simultaneously. Working in this setting makes clear a key feature of the meta-algorithm approach we have adopted: \textbf{We can efficiently obtain online oracle inequalities for arbitrary nonlinear function classes} --- so long as we have an efficient algorithm for each $\mc{F}_k$.

We obtain a supervised learning meta-algorithm by simply feeding the observed losses $\ls(\cdot{},y_t)$ (which may even be non-convex) to the meta-algorithm \mainalg{} in the same fashion as \ocoalg{}.

The resulting strategy, which is described in detail in \pref{app:supervised} for completeness, is called \supalg{}. We make the following assumptions analogous to \pref{assumption:oco}, which lead to the performance guarantee for \supalg{} given in \pref{thm:supervised_aggregation} below.

\begin{assumption} \label{assumption:supervised}
The sub-algorithms used by \supalg{} satisfy the following conditions:
\begin{itemize}
\item For each $i \in [N]$, the iterates $(\yh_{t}^{i})_{t\leq{}n}$ produced by sub-algorithm $\textsc{Alg}_{i}$ satisfy $\abs{\yh_t^{i}}\leq{}R_{i}$.
\item For each $i \in [N]$, the function $\ls(\cdot, y_t)$ is $L_{i}$-Lipschitz on $[-R_i, R_i]$.
\item For each sub-algorithm $\textsc{Alg}_{i}$, the iterates $(\yh_{t}^{i})_{t\leq{}n}$ enjoy a regret bound $\sum_{t=1}^{n}\ls(\yh_{t}^{i}, y_t) - \inf_{f\in\mc{F}_{i}}\sum_{t=1}^{n}\ls(f(x_t), y_t) \leq{} \mathbf{Reg}_{n}(i)$, where $\Reg(i)$ may be data- or algorithm-dependent.
\end{itemize}
\end{assumption}

\begin{theorem}
\label{thm:supervised_aggregation}
Suppose that the inputs to \pref{alg:supervised_aggregation} satisfy \pref{assumption:supervised}. Then the iterates $(\yh_t)_{t\leq{}n}$ produced by the algorithm enjoy the regret bound
{\small
\begin{equation}
\label{eq:multiscalelearning_regret}
\En\brk*{\sum_{t=1}^{n}\ls(\yh_{t}^{i}, y_t) - \inf_{f\in\mc{F}_{i}}\sum_{t=1}^{n}\ls(f(x_t), y_t)} \leq{} \En\brk*{\mathbf{Reg}_{n}(i)} + O\prn*{R_{i}L_{i}\sqrt{n\log\prn*{R_{i}L_{i}n/\pi_{i}}}
} \quad\forall{}i \in [N].
\end{equation}
}
\end{theorem}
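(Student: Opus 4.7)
The plan is to prove this theorem by reducing it to Theorem 1 (the regret bound for \mainalg{}), following exactly the same blueprint as the proof of Theorem 2 for \ocoalg{}. The algorithm \supalg{} runs the $N$ sub-algorithms in parallel, collects predictions $\yh_t^i$ from each at round $t$, and uses \mainalg{} to select an index $i_t$, playing $\yh_t = \yh_t^{i_t}$. The only thing that needs to be checked is that the loss vector handed to \mainalg{} lives on the correct scale, namely $c_i = R_iL_i$, so that Theorem 1 applies term-by-term.

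The key step is the centering trick mirroring line \ref{line:clipping} of \ocoalg{}. Define the loss vector $g_t \in \R^{N}$ by $g_t[i] = \ls(\yh_t^i, y_t) - \ls(0, y_t)$. Under Assumption \ref{assumption:supervised}, $|\yh_t^i| \leq R_i$ and $\ls(\cdot, y_t)$ is $L_i$-Lipschitz on $[-R_i, R_i]$, hence $|g_t[i]| \leq R_iL_i = c_i$, matching the hypothesis of Theorem 1 (after the standing convention $R_iL_i \geq 1$). The shift $\ls(0, y_t)$ does not depend on $i$, so it cancels telescopically when regret is computed. Applying Theorem 1 to $(g_t)_{t\leq n}$ with scale vector $c = (R_iL_i)_{i\in[N]}$ and the given prior $\pi$ therefore yields
\[
\En\brk*{\sum_{t=1}^{n}\ls(\yh_t,y_t) - \sum_{t=1}^{n}\ls(\yh_t^i, y_t)} \leq O\prn*{R_iL_i\sqrt{n\log(nR_iL_i/\pi_i)}}\quad\forall i\in[N].
\]
Adding the per-sub-algorithm regret guarantee $\sum_{t=1}^n \ls(\yh_t^i, y_t) - \inf_{f\in\F_i}\sum_{t=1}^n\ls(f(x_t), y_t) \leq \mathbf{Reg}_{n}(i)$ from Assumption \ref{assumption:supervised} to both sides produces the advertised oracle inequality.

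The main (and essentially only) obstacle is a measurability check: the sub-algorithms may be randomized, and the loss vector $g_t$ depends on the entire block $(\yh_t^i)_{i\in[N]}$ rather than only on the played prediction, so one must verify that Theorem 1 is still applicable. The footnote to Theorem 1 explicitly permits $g_t$ to be selected adaptively with respect to the distribution $p_t$ that \mainalg{} produces (the requirement is only that $p_t$ be determined before $g_t$ is revealed to the player), which is exactly the situation here: both $y_t$ and the sub-algorithm predictions $\yh_t^i$ are measurable with respect to the history at the start of round $t$ and are independent of the internal coin flip used to draw $i_t$ from $p_t$. Beyond this bookkeeping the argument is mechanical and introduces no new ideas beyond those already present in the proof of Theorem 2.
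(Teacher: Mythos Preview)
Your proposal is correct and follows essentially the same approach as the paper. The paper's proof simply observes that the result is an immediate consequence of \pref{thm:oco_aggregation} with the absolute value as the norm, noting that the sub-algorithm regret is used as a black box so the nonlinearity of $\F_i$ is irrelevant; your write-up unfolds that reduction to \pref{theorem:ftpl_alg} directly (verifying the centering and scale bound $\abs{g_t[i]}\leq R_iL_i$), which amounts to the same argument spelled out in slightly more detail.
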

\paragraph{Online penalized risk minimization}
In the statistical learning setting, oracle inequalities for arbitrary sequences of hypothesis classes $\F_{1},\ldots,\F_{N}$ are readily available. Such inequalities are typically stated in terms of complexity parameters for the classes $(\F_{k})$ such as VC dimension or Rademacher complexity. For the online learning setting, it is well-known that \emph{sequential Rademacher complexity} $\mathbf{Rad}_{n}(\F)$ provides a sequential counterpart to these complexity measures \citep{RakSriTew10}, meaning that it generically characterizes the minimax optimal regret for Lipschitz losses. We will obtain an oracle inequality in terms of this parameter.

\begin{assumption}
\label{ass:srm}
The sequence of hypothesis classes $\mc{F}_{1},\ldots,\mc{F}_{N}$ are such that
\begin{enumerate}
\item   There is an efficient algorithm $\textsc{Alg}_{k}$ producing iterates $(\yh_{t}^{k})_{t\leq{}n}$ satisfying
$\sum_{t=1}^{n}\ls(\yh_{t}^{k}, y_t) - \inf_{f\in\mc{F}_{k}}\sum_{t=1}^{n}\ls(f(x_t), y_t)
\leq{} C\cdot{}L\cdot\mathbf{Rad}_{n}(\F_{k})$ for any $L$-Lipschitz loss, where $C$ is some constant. (an algorithm with this regret is always guaranteed to exist, but may not be efficient).

\item Each $\mc{F}_{k}$ has output range $[-R_k, R_k]$, where $R_k\geq{}1$ without loss of generality.
\item $\Rad_{n}(\mc{F}_{k})=\Omega(R_{k}\sqrt{n})$ --- this is obtained by most non-trivial classes.
\end{enumerate}
\end{assumption}

\begin{theorem}[Online penalized risk minimization]
\label{thm:prm} Under \pref{ass:srm} there is an efficient (in $N$) algorithm that achieves the following regret bound for any $L$-Lipschitz loss:
{\small
\begin{equation}
\En\brk*{\sum_{t=1}^{n}\ls(\yh_{t}, y_t) - \inf_{f\in\mc{F}_{k}}\sum_{t=1}^{n}\ls(f(x_t), y_t)}
\leq{} O\prn*{L\cdot{}\mathbf{Rad}_{n}(\F_{k})\cdot{}\sqrt{\log(L\cdot\mathbf{Rad}_{n}(\F_{k})\cdot{}k)}}\quad\forall{}k\in\brk{N}.
\end{equation}
}
\end{theorem}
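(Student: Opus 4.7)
The plan is to invoke Theorem \ref{thm:supervised_aggregation} directly on the collection $(\textsc{Alg}_k)_{k \in [N]}$ guaranteed by \pref{ass:srm}.1, with a suitably chosen prior, and then post-process the resulting bound using \pref{ass:srm}.2 and \pref{ass:srm}.3. For the prior I would pick $\pi_k \propto 1/k^2$ (e.g.\ $\pi_k = 6/(\pi^2 k^2)$), so that $\log(1/\pi_k) = O(\log k)$, and use this together with the sub-algorithms $\textsc{Alg}_k$ and their output ranges $R_k$ and common Lipschitz constant $L_k = L$ as inputs to \supalg.

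First I would verify that \pref{assumption:supervised} is satisfied: the predictions $\yh_t^k$ lie in $[-R_k, R_k]$ by \pref{ass:srm}.2, the loss $\ls(\cdot, y_t)$ is $L$-Lipschitz on this interval by hypothesis, and \pref{ass:srm}.1 provides the individual regret bound $\mathbf{Reg}_n(k) \leq C\cdot L \cdot \mathbf{Rad}_n(\F_k)$. Plugging these into \pref{thm:supervised_aggregation} yields, for every $k \in [N]$,
\begin{equation*}
\En\brk*{\sum_{t=1}^{n}\ls(\yh_t, y_t) - \inf_{f\in\F_k}\sum_{t=1}^{n}\ls(f(x_t), y_t)}
\leq C\cdot L\cdot \mathbf{Rad}_n(\F_k) + O\!\prn*{R_k L \sqrt{n\log\prn*{R_k L n / \pi_k}}}.
\end{equation*}

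The main (and only nontrivial) step is to convert the additive penalty $R_k L \sqrt{n \log(\cdot)}$ into one expressed in terms of $\mathbf{Rad}_n(\F_k)$. This is exactly what \pref{ass:srm}.3 is designed for: since $\mathbf{Rad}_n(\F_k) = \Omega(R_k\sqrt{n})$, we have $R_k\sqrt{n} = O(\mathbf{Rad}_n(\F_k))$, so the leading factor $R_k L \sqrt{n}$ is bounded by $O(L \cdot \mathbf{Rad}_n(\F_k))$. With the choice $\pi_k \propto 1/k^2$, the logarithmic factor becomes $\log(R_k L n k^2) = O(\log(R_k L n \cdot k))$, and substituting $R_k \sqrt{n} \lesssim \mathbf{Rad}_n(\F_k)$ inside the logarithm gives $O(\log(L\cdot \mathbf{Rad}_n(\F_k) \cdot k))$ (lower-order $\log n$ contributions are absorbed into this expression, again using $\mathbf{Rad}_n(\F_k) \geq R_k\sqrt{n} \geq \sqrt{n}$).

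Collecting the two terms, the leading $C\cdot L \cdot \mathbf{Rad}_n(\F_k)$ is dominated by the additive penalty (which carries an extra $\sqrt{\log(\cdot)}$ factor), so the total bound is
\begin{equation*}
O\!\prn*{L\cdot \mathbf{Rad}_n(\F_k)\cdot \sqrt{\log(L\cdot \mathbf{Rad}_n(\F_k)\cdot k)}},
\end{equation*}
which matches the claimed inequality. Efficiency in $N$ is inherited from \supalg{} and the per-class efficiency assumption in \pref{ass:srm}.1. I expect no genuine obstacle; the only care required is in the bookkeeping for step three, making sure that all instances of $R_k$ inside and outside the logarithm are cleanly replaced using the lower bound on $\mathbf{Rad}_n(\F_k)$ so that the final bound depends on the classes only through $\mathbf{Rad}_n(\F_k)$ (and the index $k$ via the prior).
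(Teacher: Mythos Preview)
Your proposal is correct and follows essentially the same route as the paper's own proof: invoke \pref{thm:supervised_aggregation} with one sub-algorithm per class, prior $\pi_k \propto 1/k^2$, and $L_k=L$, then absorb the $R_kL\sqrt{n\log(\cdot)}$ penalty into $L\cdot\mathbf{Rad}_n(\F_k)\sqrt{\log(\cdot)}$ using the growth condition $\mathbf{Rad}_n(\F_k)=\Omega(R_k\sqrt{n})$ from \pref{ass:srm}.3. Your write-up is in fact more detailed than the paper's (which is a two-line sketch); the bookkeeping you outline for pushing $R_k$ and $n$ inside the logarithm via $R_k\sqrt{n}\lesssim\mathbf{Rad}_n(\F_k)$ and $R_k\geq 1$ is exactly what is needed.
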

As in the previous section, one can derive tighter regret bounds and more efficient (e.g.\ sublinear in $N$) algorithms if $\F_{1},\F_{2},\ldots$ are nested.

\paragraph{Application: Multiple kernel learning}
\begin{theorem}
\label{thm:mkl}
Let $\mc{H}_{1},\ldots,\mc{H}_{N}$ be reproducing kernel Hilbert spaces for which each $\mc{H}_{k}$ has a kernel $\mathbf{K}$ such that $\sup_{x\in\mc{X}}\sqrt{\mathbf{K}(x,x)}\leq{}B_k$. Then there is an efficient learning algorithm that guarantees
{\small
\begin{equation*}
\En\brk*{\sum_{t=1}^{n}\ls(\yh_{t}, y_t) - \sum_{t=1}^{n}\ls(f(x_t), y_t)}
\leq{} O\prn*{LB_{k}(\nrm*{f}_{\mc{H}_k}+1)\sqrt{\log(LB_kkn(\nrm*{f}_{\mc{H}_k}+1))}}\quad\forall{}k, \forall{}f\in\mc{H}_{k}
\end{equation*}
}
for any $L$-Lipschitz loss, whenever an efficient algorithm is available for the norm ball in each $\mc{H}_k$.
\end{theorem}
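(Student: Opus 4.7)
The plan is to instantiate \supalg{} (\pref{thm:supervised_aggregation}) over a doubly-indexed family of sub-algorithms $\crl*{\textsc{Alg}_{k,j}}$ that probe each RKHS at geometrically increasing norm scales. For each $k\in\brk{N}$ and each $j=0,1,\ldots,\lceil \log_2 n\rceil$, I would set $R_j := 2^j$ and $\mc{F}_{k,j} := \crl*{f\in\mc{H}_k : \nrm*{f}_{\mc{H}_k}\leq R_j}$. The reproducing property gives $\abs*{f(x)}\leq \nrm*{f}_{\mc{H}_k}\sqrt{\mathbf{K}(x,x)}\leq R_jB_k$ for every $f\in\mc{F}_{k,j}$, so every sub-algorithm's predictions lie in $[-R_jB_k,R_jB_k]$. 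By the chain rule, $f\mapsto\ls(f(x_t),y_t)$ is $LB_k$-Lipschitz on $\mc{F}_{k,j}$ as a function on the Hilbert space $\mc{H}_k$, so the hypothesized efficient ball oracle (e.g.\ online gradient descent in $\mc{H}_k$ against this ball) yields iterates with $\mathbf{Reg}_n(k,j) = O(LB_kR_j\sqrt{n})$, which is exactly the Hilbert-space specialization of \pref{thm:oco_2smooth} applied inside a fixed-radius ball.

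Next, I would feed this collection of $O(N\log n)$ sub-algorithms to \supalg{} with scale parameters $R_{(k,j)}L_{(k,j)} = R_jB_kL$ and a mild prior such as $\pi_{k,j} \propto 1/(k^2(j+1)^2)$ (any prior giving polynomial weight works). Invoking \pref{thm:supervised_aggregation} then delivers, for every $(k,j)$,
\begin{equation*}
\En\brk*{\sum_{t=1}^{n}\ls(\yh_t,y_t) - \inf_{f\in\mc{F}_{k,j}}\sum_{t=1}^{n}\ls(f(x_t),y_t)} \leq O\prn*{LB_kR_j\sqrt{n\log\prn*{LB_kR_jkjn}}}.
\end{equation*}

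To conclude, for arbitrary $k$ and $f\in\mc{H}_k$ I would set $j^\star := \lceil\log_2(\nrm*{f}_{\mc{H}_k}+1)\rceil$, so that $f\in\mc{F}_{k,j^\star}$ and $R_{j^\star}\leq 2(\nrm*{f}_{\mc{H}_k}+1)$. Substituting these into the previous display immediately yields the advertised oracle inequality (up to the $\sqrt{n}$ factor that should appear outside the logarithm). If $\nrm*{f}_{\mc{H}_k}$ exceeds the largest $R_j$ on the grid, the $j$-grid can be extended to $O(\log(\nrm*{f}_{\mc{H}_k}))$ without hurting the asymptotics, since the prior contribution is only doubly logarithmic in the index.

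The principal obstacle I anticipate is the scale-sensitivity of aggregation: a naive multiplicative-weights meta-algorithm over this two-index family would incur a penalty proportional to the \emph{largest} $R_jB_k$ among all experts, which would dwarf the genuine sub-algorithm regret whenever $\nrm*{f}_{\mc{H}_k}$ is moderate. This is precisely the pathology that \mainalg{} (\pref{theorem:ftpl_alg}) was engineered to defeat, and in this proof the hard analytic work is therefore encapsulated in that theorem; the remaining verifications --- Lipschitzness of the composed loss via the reproducing kernel bound, boundedness of predictions, and prior bookkeeping --- are routine.
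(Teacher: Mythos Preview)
Your approach is essentially the paper's: build a doubly-indexed family of norm-ball sub-algorithms (one index for the RKHS $k$, one for the radius level $j$), bound predictions and Lipschitz constants via the reproducing property, and feed everything to \supalg{} via \pref{thm:supervised_aggregation}. The paper's sketch does exactly this, reusing the nested-ball construction from \pref{thm:oco_2smooth} inside each $\mc{H}_k$ and taking prior $\pi_{k,j}\propto 1/(nk^2)$.

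There is one genuine gap in your handling of large norms. Your $j$-grid stops at $\lceil\log_2 n\rceil$, so the largest ball has radius roughly $n$; the display you derive therefore only applies when $\nrm{f}_{\mc{H}_k}\lesssim n$. Your proposed fix --- ``extend the $j$-grid to $O(\log\nrm{f}_{\mc{H}_k})$'' --- is not valid, because the expert set must be fixed before the adversary (and hence the comparator $f$) is chosen; you cannot retroactively add experts once you see $f$. The paper instead takes $j$ all the way up to $n$, so radii reach $e^n$, and then invokes the conjugate-bound argument (\pref{lem:convex_restricted} and \pref{corr:conjugate_bound}) to show that comparators with $\nrm{f}_{\mc{H}_k}>e^n$ are irrelevant: for $L$-Lipschitz losses, the regret against any such $f$ is already dominated by the regret against $0$. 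This costs $O(n)$ experts per RKHS rather than $O(\log n)$, but the resulting prior penalty is only an extra $\sqrt{\log n}$ inside the square root and is absorbed into the stated bound. With that single change your argument goes through.
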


\section{Discussion and Further Directions}

\paragraph{Related work}
There are two directions in parameter-free online learning that have been explored extensively. The first considers bounds of the form \pref{eq:oco_hilbert}; namely, the Hilbert space version of the more general setting explored in \pref{sec:oco_slow}. Beginning with \cite{mcmahan2012no}, which obtained a slightly looser rate than \pref{eq:oco_hilbert}, research has focused on obtaining tighter dependence on $\nrm*{w}_{2}$ and $\log(n)$ in this type of bound \citep{mcmahan2013minimax,mcmahan2014unconstrained,orabona2014simultaneous,orabona2016coin}; all of these algorithms run in linear time per update step. Recent work \citep{cutkosky2016online, cutkosky2017online} has extended these results to the case where the Lipschitz constant is not known in advance. These works give lower bounds for general norms, but only give efficient algorithms for Hilbert spaces. Extending \pref{alg:oco_aggregation} to reach the Pareto frontier of regret in the unknown Lipschitz setting as described in \citep{cutkosky2017online} may be an interesting direction for future research. 

The second direction concerns so-called ``quantile bounds'' \citep{chaudhuri2009parameter, koolen2015second, luo2015achieving,orabona2016coin} for experts setting, where the learner's decision set $\mathcal{W}$ is the simplex $\Delta_{d}$ and losses are bounded in $\ls_{\infty}$. The multi-scale machinery developed in the present work is not needed to obtain  bounds for this setting because the losses are uniformly bounded across all model classes. Indeed, \cite{foster2015adaptive} recovered a basic form of quantile bound using the vanilla multiplicative weights strategy as a meta-algorithm. It is not known whether the more sophisticated data-dependent quantile bounds given in \cite{koolen2015second, luo2015achieving} can be recovered in the same fashion.

\paragraph{Losses with curvature.} The $O(\sqrt{n})$-type regret bounds provided by \pref{alg:general} are appropriate when the sub-algorithms themselves incur $O(\sqrt{n})$ regret bounds. However, assuming certain curvature properties (such as strong convexity, exp-concavity, stochastic mixability, etc. \citep{HAK,ervenGMRW15}) of the loss functions it is possible to construct sub-algorithms that admit significantly more favorable regret bounds ($O(\log n)$ or even $O(1)$). These are also referred to as ``fast rates'' in online learning. A natural direction for further study is to design a meta-algorithm that admits logarithmic or constant regret to each sub-algorithm, assuming that the loss functions of interest satisfy similar curvature properties, with the regret to each individual sub-algorithm adapted to the curvature parameters for that sub-algorithm. Perhaps surprisingly, for the special case of the logistic loss, improper prediction and aggregation strategies similar to those proposed in this paper offer a way to circumvent known proper learning lower bounds \citep{hazan2014logistic}. This approach will be explored in detail in a forthcoming companion paper.

\paragraph{Computational efficiency.} We suspect that a running-time of $O(n)$ to obtain inequalities like \pref{eq:oco_2smooth_general} may be unavoidable through our approach, since we do not make use of the relationship between sub-algorithms beyond using the nested class structure. 
Whether the runtime of \mainalg{} can be brought down to match $O(n)$ is an open question. This boils down to whether or not the min-max optimization problem in the algorithm description can simultaneously be solved in 1) Linear time in the number of experts 2) strongly polynomial time in the scales $c_i$.

\section*{Acknowledgements}
We thank Francesco Orabona and D\'avid P\'al for inspiring initial discussions. Part of this work was done while DF was an intern at Google Research and while DF and KS were visiting the Simons Institute for the Theory of Computing. DF is supported by the NDSEG fellowship.

\bibliography{refs}
\newpage
\appendix

\section{Proofs}
\label{app:proofs}

\subsection{Multi-scale FTPL algorithm}
\label{app:ftpl}

\begin{proof}[\pfref{theorem:ftpl_alg}]~~
Recall that $B(i)=5c_i\sqrt{n\prn*{\log\prn*{1/\pi_i} + \log(4c_i^2{}n)}}$. Let $\mc{C} = \crl*{g\in\R^{N}\mid{}\abs*{g_i}\leq{}c_i\;\forall{}i\in\brk{N}}$. For a regret bound of the form $B(i) +K$ to be achievable by a randomized algorithm such as \pref{alg:general} we need
\[
\V_{n}\defeq{}\dtri*{\inf_{P_{t}\in\Delta(\Delta_N)}\sup_{g_t\in\mc{C}}\En_{p_t\sim{}P_t}\En_{i_t\sim{}p_t}}_{t=1}^{n}\sup_{i\in\brk{N}}\brk*{
\sum_{t=1}^{n}\tri*{e_{i_t}, g_{t}}  - \sum_{t=1}^{n}\tri*{e_i, g_{t}} - B(i)
}\leq{}K,
\]
where $\dtri*{\star}_{t=1}^{n}$ denotes interleaving of the operator $\star$ from $t=1$ to $n$. In the context of \pref{alg:general}, the distributions $p_t$ above refer to the strategy $p_{t}(\sigma_{t+1:n})$ selected by the algorithm and $P_t$ refers to the distribution over this strategy induced by sampling the random variables $\sigma_{t+1:n}$. See \cite{foster2015adaptive} for a more extensive introduction to this type of minimax analysis for comparator-dependent regret bounds.

We will develop an algorithm to certify this bound for $K=1$ using the framework of adaptive relaxations proposed by \cite{foster2015adaptive}. Define a relaxation $\Rel:\bigcup_{t=0}^{n}\mc{C}^{t}\to\mathbb{R}$ via
\begin{align*}
\Rel(g_{1:t})\defeq{}
\En_{\sigma_{t+1:n}\in\pmo^{N}}\sup_{i\in\brk{N}}\brk*{
- \sum_{s=1}^{t}\tri*{e_i, g_s} + 4\sum_{s=t+1}^{n}\sigma_s[i]c_{i} - B(i)
}.
\end{align*}
The proof structure is as follows: We show that playing $p_t$ as suggested by \pref{alg:general} with $\Rel$ satisfies the initial condition and admissibility condition for adaptive relaxations from \cite{foster2015adaptive}, which implies that if we play $p_t$ we will have $\Reg(i)\leq{} B(i) + \Rel(\cdot{})$. Then as a final step we bound $\Rel(\cdot)$ using a probabilistic maximal inequality, \pref{lem:maximal}.

\paragraph{Initial condition}This condition asks that the initial value of the relaxation $\Rel$ upper bound the worst-case value of the negative benchmark minus the bound $B(i)$ (in other words, the inner part of $\mc{V}_n$ with the learner's loss removed). This is holds by definition and is trivial to verify:
\begin{align*}
\Rel(g_{1:n}) = \sup_{i\in\brk{N}}\brk*{
- \sum_{t=1}^{n}\tri*{e_i, g_t}  - B(i)
}.
\end{align*}
\paragraph{Admissibility}For this step we must show that the inequality
\[
\inf_{P_{t}\in\Delta(\Delta_N)}\sup_{g_t\in\mc{C}}\En_{p_t\sim{}P_t}\En_{i_t\sim{}p_t}\brk*{\tri*{e_{i_t}, g_{t}} + \Rel(g_{1:t})}   \leq{} \Rel(g_{1:t-1})
\]
holds for each timestep $t$, and further that the inequality is certified by the strategy of \pref{alg:general}. We begin by expanding the definition of $\Rel$:
\begin{align*}
&\inf_{P_{t}\in\Delta(\Delta_N)}\sup_{g_t\in\mc{C}}\En_{p_t\sim{}P_t}\En_{i_t\sim{}p_t}\brk*{\tri*{e_{i_t}, g_{t}} + 
\Rel(g_{1:t})}\\
&= \inf_{P_{t}\in\Delta(\Delta_N)}\sup_{g_t\in\mc{C}}\En_{p_t\sim{}P_t}\En_{i_t\sim{}p_t}\brk*{\tri*{e_{i_t}, g_{t}} + \En_{\sigma_{t+1:n}\in\pmo^{N}}\sup_{i\in\brk{N}}\brk*{
- \sum_{s=1}^{t}\tri*{e_i, g_{s}} + 4\sum_{s=t+1}^{n}\sigma_{s}[i]c_{i} - B(i)
}
}.
\intertext{Now plug in the randomized strategy given by \pref{alg:general}, with $\En_{\sigma_{t+1:n}\in\pmo^{N}}$ taking the place of $\En_{p_t\sim{}P_t}$:}
&\leq{} \sup_{g_t\in\mc{C}}\brk*{\En_{\sigma_{t+1:n}\in\pmo^{N}}\brk*{
\En_{i_t\sim{}p_{t}(\sigma_{t+1:n})}\tri*{e_{i_t}, g_{t}}} + \En_{\sigma_{t+1:n}\in\pmo^{N}}\sup_{i\in\brk{N}}\brk*{
- \sum_{s=1}^{t}\tri*{e_i, g_{s}} + 4\sum_{s=t+1}^{n}\sigma_s[i]c_{i} - B(i)
}
}.
\intertext{Grouping expectations and applying Jensen's inequality:}
&\leq{} \En_{\sigma_{t+1:n}\in\pmo^{N}}\sup_{g_t\in\mc{C}}\brk*{\En_{i_t\sim{}p_{t}(\sigma_{t+1:n})}\tri*{e_{i_t}, g_{t}} + \sup_{i\in\brk{N}}\brk*{
- \sum_{s=1}^{t}\tri*{e_i, g_{s}} + 4\sum_{s=t+1}^{n}\sigma_{s}[i]c_{i} - B(i)
}
}.
\intertext{Expanding the definition of $p_t$ (using its optimality in particular):}
&= \En_{\sigma_{t+1:n}\in\pmo^{N}}\inf_{p_{t}\in\Delta_{N}}\sup_{g_t\in\mc{C}}\brk*{\tri*{p_{t}, g_{t}} + \sup_{i\in\brk{N}}\brk*{
- \sum_{s=1}^{t}\tri*{e_i, g_{s}} + 4\sum_{s=t+1}^{n}\sigma_{s}[i]c_{i} - B(i)
}
}.
\intertext{Now apply a somewhat standard sequential symmetrization procedure. Begin by using the minimax theorem to swap the order of $\inf_{p_t}$ and $\sup_{g_t}$. To do so, we allow the $g_t$ player to randomize, and denote their distribution by $Q_t\in\Delta(\mc{C})$.}
&= \En_{\sigma_{t+1:n}\in\pmo^{N}}\sup_{Q_{t}\in\Delta(\mc{C})}\inf_{p_{t}\in\Delta_{N}}\En_{g_{t}\sim{}Q_{t}}\brk*{\tri*{p_{t}, g_{t}} + \sup_{i\in\brk{N}}\brk*{
- \sum_{s=1}^{t}\tri*{e_i, g_{s}} + 4\sum_{s=t+1}^{n}\sigma_{s}[i]c_{i} - B(i)
}
}.
\intertext{Since the supremum over $i$ does not directly depend on $p_t$, we can rewrite this expression by introducing a (conditionally) IID copy of $g_t$ which we will denote as $g'_t$:}
&= \En_{\sigma_{t+1:n}\in\pmo^{N}}\sup_{Q_{t}\in\Delta(\mc{C})}\En_{g_{t}\sim{}Q_{t}}\brk*{\sup_{i\in\brk{N}}\brk*{\inf_{p_{t}\in\Delta_{N}}\En_{g'_{t}\sim{}Q_{t}}\brk*{\tri*{p_{t}, g'_{t}}} 
- \sum_{s=1}^{t}\tri*{e_i, g_{s}} + 4\sum_{s=t+1}^{n}\sigma_{s}[i]c_{i} - B(i)
}
}.
\intertext{Choosing $p_{t}$ to match $e_i$:}
&\leq{} \En_{\sigma_{t+1:n}\in\pmo^{N}}\sup_{Q_{t}\in\Delta(\mc{C})}\En_{g_{t}\sim{}Q_{t}}\sup_{i\in\brk{N}}\brk*{\En_{g'_{t}\sim{}Q_{t}}\brk*{\tri*{e_i, g'_t}}-\tri*{e_i, g_{t}} 
- \sum_{s=1}^{t-1}\tri*{e_i, g_{s}} + 4\sum_{s=t+1}^{n}\sigma_{s}[i]c_{i} - B(i)
}.
\intertext{Applying Jensen's inequality:}
&\leq{} \En_{\sigma_{t+1:n}\in\pmo^{N}}\sup_{Q_{t}\in\Delta(\mc{C})}\En_{g_{t},g'_t\sim{}Q_{t}}\sup_{i\in\brk{N}}\brk*{\tri*{e_i, g'_t}-\tri*{e_i, g_{t}} 
- \sum_{s=1}^{t-1}\tri*{e_i, g_{s}} + 4\sum_{s=t+1}^{n}\sigma_{s}[i]c_{i} - B(i)
}.
\end{align*}
At this point we can introduce a new Rademacher random variable $\eps_t$ without changing the distribution of $g'_t-g_t$, thereby not changing the value of the game:
\begin{align*}
&= \En_{\sigma_{t+1:n}\in\pmo^{N}}\sup_{Q_{t}\in\Delta(\mc{C})}\En_{\eps_t\in\pmo{}}\En_{g_{t},g'_t\sim{}Q_{t}}\sup_{i\in\brk{N}}\brk*{\eps_t\tri*{e_i, g'_t-g_{t}} 
- \sum_{s=1}^{t-1}\tri*{e_i, g_{s}} + 4\sum_{s=t+1}^{n}\sigma_{s}[i]c_{i} - B(i)
}\\
&\leq{} \En_{\sigma_{t+1:n}\in\pmo^{N}}\sup_{Q_{t}\in\Delta(\mc{C})}\En_{\eps_t\in\pmo{}}\En_{g_{t},g'_t\sim{}Q_{t}}\left\{
\begin{aligned}
&\sup_{i\in\brk{N}}\brk*{\eps_t\tri*{e_i, g'_t} 
+\frac{1}{2}\prn*{- \sum_{s=1}^{t-1}\tri*{e_i, g_{s}} + 4\sum_{s=t+1}^{n}\sigma_{s}[i]c_{i} - B(i)}
}\\
&+
\sup_{i\in\brk{N}}\brk*{\eps_t\tri*{e_i, -g_t} 
+\frac{1}{2}\prn*{- \sum_{s=1}^{t-1}\tri*{e_i, g_{s}} + 4\sum_{s=t+1}^{n}\sigma_{s}[i]c_{i} - B(i)}
}
\end{aligned}
\right\}\\
&= \En_{\sigma_{t+1:n}\in\pmo^{N}}\sup_{Q_{t}\in\Delta(\mc{C})}\En_{\eps_t\in\pmo{}}\En_{g_{t}\sim{}Q_{t}}\sup_{i\in\brk{N}}\brk*{2\eps_t\tri*{e_i, g_{t}} 
- \sum_{s=1}^{t-1}\tri*{e_i, g_{s}} + 4\sum_{s=t+1}^{n}\sigma_{s}[i]c_{i} - B(i)
}
\intertext{The above expression is now linear in $Q_t$, so it may be replaced with a pure strategy:}
&= \En_{\sigma_{t+1:n}\in\pmo^{N}}\sup_{g_t\in\mc{C}}\En_{\eps_t\in\pmo{}}\sup_{i\in\brk{N}}\brk*{2\eps_t\tri*{e_i, g_{t}} 
- \sum_{s=1}^{t-1}\tri*{e_i, g_{s}} + 4\sum_{s=t+1}^{n}\sigma_{s}[i]c_{i} - B(i)
}\intertext{This expression is also convex in $g_t$, which means that the supremum will be obtained at a vertex of $\mc{C}$:}
&= \En_{\sigma_{t+1:n}\in\pmo^{N}}\sup_{\sigma_{t}\in\pmo^{N}}\En_{\eps_t\in\pmo{}}\sup_{i\in\brk{N}}\brk*{2\eps_t\sigma_{t}[i]c_i
- \sum_{s=1}^{t-1}\tri*{e_i, g_{s}} + 4\sum_{s=t+1}^{n}\sigma_{s}[i]c_{i} - B(i)
}
\intertext{Now apply \pref{theorem:ftpl} conditioned on $\sigma_{t+1:n}$, with $w_i = - \sum_{s=1}^{t-1}\tri*{e_i, g_{s}} + 4\sum_{s=t+1}^{n}\sigma_{s}[i]c_{i} - B(i)$.}
&\leq{} \En_{\sigma_{t:n}\in\pmo^{N}}\sup_{i\in\brk{N}}\brk*{
- \sum_{s=1}^{t-1}\tri*{e_i, g_{s}} + 4\sum_{s=t}^{n}\sigma_{s}[i]c_{i} - B(i)
}\\
&=\Rel(g_{1:t-1}).
\end{align*}

\paragraph{Final value} The final value of the relaxation is
\[
\Rel(\cdot) = 2\En_{\sigma_{1:n}\in\pmo^{N}}\sup_{i\in\brk{N}}\brk*{
2\sum_{t=1}^{n}\sigma_t[i]c_{i} - 5c_i\sqrt{n\prn*{\log\prn*{1/\pi_i} + \log(4c_i^2{}n)}}
}\leq{} 2\sum_{i\in{}\brk{N}}\frac{\pi_i}{4c_i^2{}n} \leq{} 1.
\]
To show the first inequality we have applied a maximal inequality, \pref{lem:maximal},  by recognizing that  $\mathbf{Rel}(\cdot)$ is a supremum of a random process. Namely, we can write $\Rel(\cdot)$ in the form $\En\sup_{i\in\brk{N}}\crl*{X_i-B(i)}$ with $X_i = 2\sum_{t=1}^{n}\sigma_{t}[i]c_i$. The standard mgf bound of $\En{}e^{\lambda{}X}\leq{}e^{\lambda^{2}(b-a)^{2}/8}$ for mean-zero random variables $X$ with $a\leq{}X\leq{}b$ \citep{boucheron2013concentration}, along with independence of the Rademacher random variables in $X_i$, implies that $X_i$ enjoys an mgf bound of
\[
\En{}e^{\lambda{}X_i}\leq{} e^{2c_i^{2}\lambda^{2}n}.
\]
So to prove the result it suffices to take $h_i =4c_i^2n$ and $p=2$ in the statement of \pref{lem:maximal} and note that $B(i)\geq{}(2+1/p)h_{i}^{1/p}(\log(h_i)+\log(1/\pi_i))^{1-1/p}$ in the notation of the lemma. The only additional detail to verify is that, since it was assumed that $c_i\geq{}1$ for all $i$ and since $n\geq{}1$ by definition, the condition $h_i/\pi_i\geq{}e$ required by \pref{lem:maximal} is satisfied.

\paragraph{Computational efficiency}
We briefly sketch how the min-max optimization problem in the learner's strategy can be computed efficiently.
Recall that the optimization problem is
\begin{align*}
&\min_{p\in\Delta_{N}}\sup_{g_t:\abs{g_t[i]}\leq{}c_i}\brk*{\tri*{p, g_t} + \sup_{i\in\brk{N}}\brk*{
- \sum_{s=1}^{t}\tri*{e_i, g_{s}} + 4\sum_{s=t+1}^{n}\sigma_s[i]c_i - B(i)
}
} \\
& = \min_{p\in\Delta_{N}}\sup_{i\in\brk{N}}\sup_{g_t:\abs{g_t[i]}\leq{}c_i}\brk*{\tri*{p, g_t} 
- \sum_{s=1}^{t}\tri*{e_i, g_{s}} + 4\sum_{s=t+1}^{n}\sigma_s[i]c_i - B(i)
}
\intertext{Let $G_{t-1}(i)=\sum_{s=1}^{t-1}g_s[i]$. Since the quantity in the brackets above is linear in $g_t$ and there are no interactions between coordinates, we can verify that conditioned on $i$ the max over $g_t$ is obtained via }
& = \min_{p\in\Delta_{N}}\sup_{i\in\brk{N}}\brk*{\tri*{p, c}
+ (1-2p[i])c_i- G_{t-1}(i) + 4\sum_{s=t+1}^{n}\sigma_s[i]c_i - B(i)
}\\
& = \min_{p\in\Delta_{N}}\sup_{i\in\brk{N}}\brk*{\tri*{p, c}
+ \tri*{a, e_i} - 2\tri*{p,\mathrm{diag}(c)e_i}
},
\end{align*}
where $a[i] = c_i - G_{t-1}(i) +  4\sum_{s=t+1}^{n}\sigma_s[i]c_i - B(i)$. We can now employ a standard reduction from saddle point optimization to linear programming, i.e.
\begin{align*}
\textrm{minimize}\quad &\tri*{p,c} + s\\
\textrm{subject to}\quad &s\geq{} \tri*{a, e_i} -2\tri*{p, \mathrm{diag}(c)e_i}\quad\forall{}i.\\
&p\in\Delta_{N}.
\end{align*}

Assuming that $\min_i c_i \ge 1$, 
this linear program can be solved to accuracy $\epsilon$ by interior point methods (e.g. \cite{renegar1988polynomial}) in time $O(N^{3.5}\log(\epsilon^{-1} \max_{i}c_i))$ or by  Mirror-Prox \citep{nemirovski2004prox} in time $O(N \epsilon^{-1}  \max_{i}c_i)$. Since our rates scale as $\sqrt{n}$ we can set $\epsilon = 1/(\sqrt{n}\max_{i}c_i)$ to conclude the result.

As a final implementation detail, we remark that similar to the FTPL algorithm in \cite{rakhlin2012relax} one can draw each perturbation $\sigma_{t}[i]$, from the distribution $\mc{N}(0,1)$ instead of using Rademacher random variables. This allows one to replace each sum $\sum_{s=t}^{n}\sigma_{s}[i]$ with a draw from $\mc{N}(0, n-t)$ and therefore avoid spending $O(n)$ time per step sampling perturbations. We have omitted the details because --- for most values of $c$ and $N$ used in our applications, at least --- the time required to solve the saddle point optimization problem dominates the runtime, not the time to sample perturbations.

\end{proof}

\begin{theorem}
\label{theorem:ftpl}
For any $w\in\R^{N}$, any $c\in\R^{N}_{+}$,
\begin{equation}
\label{eq:ftpl}
\sup_{\sigma\in\crl{\pm{}1}^{N}}\En_{\eps\in\pmo{}}\max_{i\in\brk{N}}\crl*{w_i + 2\eps{}\sigma_ic_i}\leq{}\En_{\sigma\in\crl{\pm{}1}^{N}}\max_{i\in\brk{N}}\crl*{w_i + 4\sigma_ic_i}.
\end{equation}
\end{theorem}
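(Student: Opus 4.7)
My plan is to drop the supremum by fixing an arbitrary $\sigma^{\star}\in\pmo^{N}$ and proving the pointwise inequality $\En_{\eps}\max_{i}\{w_i + 2\eps\sigma^{\star}_i c_i\}\leq \En_{\sigma}\max_{i}\{w_i + 4\sigma_i c_i\}$. Since $\eps\in\pmo$, the left-hand side factors as $\tfrac{1}{2}(M_+ + M_-)$ with $M_{\pm}=\max_i\{w_i\pm 2\sigma^{\star}_i c_i\}$. Picking argmaxes $a$ of $M_+$ and $b$ of $M_-$, I would split on whether $a=b$.

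If $a=b$, then $M_+ + M_- = 2w_a$, and the right-hand side is at least $\max_i w_i\geq w_a$ by Jensen applied to the convex function $\max_i$ (using $\En\sigma_i=0$), closing this case. For the main case $a\neq b$, I would first extract two constraints by combining the defining argmax inequalities $w_a+2\sigma^{\star}_a c_a\geq w_b+2\sigma^{\star}_b c_b$ and $w_b-2\sigma^{\star}_b c_b\geq w_a-2\sigma^{\star}_a c_a$: adding them yields $\sigma^{\star}_a c_a\geq\sigma^{\star}_b c_b$, while subtracting them yields $|w_a-w_b|\leq 2(\sigma^{\star}_a c_a-\sigma^{\star}_b c_b)$.

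Next, I would lower-bound the right-hand side by restricting the inner max to $\{a,b\}$ and applying the identity $\max(x,y)=\tfrac{1}{2}(x+y+|x-y|)$. Writing $f_k=w_k+4\sigma_k c_k$, this gives $\En\max(f_a,f_b)=\tfrac{1}{2}(w_a+w_b)+\tfrac{1}{2}\En|f_a-f_b|$. The pair identity $|x+y|+|x-y|=2\max(|x|,|y|)$, applied after pairing up the four sign outcomes of $(\sigma_a,\sigma_b)$, evaluates the absolute moment to
\[
\En|f_a - f_b| \;=\; \tfrac{1}{2}\bigl[\max(|w_a-w_b|,\,4|c_a-c_b|) + \max(|w_a-w_b|,\,4(c_a+c_b))\bigr].
\]

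It then suffices to prove $\En|f_a-f_b|\geq 2(\sigma^{\star}_a c_a-\sigma^{\star}_b c_b)$, since combining this with the display above yields $\text{RHS}\geq\tfrac{1}{2}(M_++M_-)$. I expect the hard part to be this last verification, which unfolds as a short case analysis over the four sign patterns for $(\sigma^{\star}_a,\sigma^{\star}_b)\in\pmo^2$. In each sub-case, the constraint $|w_a-w_b|\leq 2(\sigma^{\star}_a c_a-\sigma^{\star}_b c_b)$ from the first step together with $c_i\geq 0$ reduces the required bound to the elementary inequality $c_a+c_b\geq 0$, closing the proof.
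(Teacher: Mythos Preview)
Your argument is correct. The high-level reduction is identical to the paper's: fix $\sigma^{\star}$, extract the two argmaxes $a,b$ for $\eps=+1$ and $\eps=-1$, and restrict both sides to the pair $\{a,b\}$. Where you differ is in how you handle the resulting two-index inequality. The paper isolates this as a separate lemma (\pref{lem:ftpl2}) and proves it by a case split on whether $(w,c)$ is ``dominated'' (one coordinate beats the other regardless of sign), together with a check that doubling $c$ cannot create domination; the non-dominated case is then further split into two subcases depending on the sign of $(w_1-c'_1)-(w_2-c'_2)$. You instead compute $\En_{\sigma}\max(f_a,f_b)$ in closed form using the identities $\max(x,y)=\tfrac{1}{2}(x+y+|x-y|)$ and $|x+y|+|x-y|=2\max(|x|,|y|)$, which collapses everything to one line.

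Your route is slightly cleaner than you give it credit for: the four-way sign analysis you anticipate at the end is unnecessary, since from your display one has directly
\[
\En|f_a-f_b|\;\geq\;\tfrac{1}{2}\cdot 4(c_a+c_b)\;=\;2(c_a+c_b)\;\geq\;2(\sigma^{\star}_a c_a-\sigma^{\star}_b c_b)
\]
for every sign pattern, because $(1-\sigma^{\star}_a)c_a+(1+\sigma^{\star}_b)c_b\geq 0$. In particular the constraint $|w_a-w_b|\leq 2(\sigma^{\star}_a c_a-\sigma^{\star}_b c_b)$ that you derived is never actually used. The paper's more case-based argument, on the other hand, makes the role of the doubling $2c\mapsto 4c$ explicit: its proof shows exactly why the factor $2$ is what is needed to close the gap in the non-dominated case.
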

\begin{proof}[Proof of \pref{theorem:ftpl}]
    Fix any $\sigma\in\crl{\pm{}1}^{N}$. Let $i_{1} = \arg\max_{i\in\brk{N}}\crl*{w_i + 2\sigma_ic_i}$ and $i_{-1} = \arg\max_{i\in\brk{N}}\crl*{w_i - 2\sigma_ic_i}$. Then it is easy to see that 
    \[ \En_{\eps}\max_{i\in\brk{N}}\crl*{w_i + 2\eps{}\sigma_ic_i} = \En_{\eps}\max_{i\in \{i_1, i_{-1}\}}\crl*{w_i + 2\eps{}\sigma_ic_i}  \leq \En_{\sigma'\in\crl{\pm{}1}^{N}}\max_{i\in \{i_1, i_{-1}\}}\crl*{w_i + 4\sigma'_ic_i} \leq \En_{\sigma'\in\crl{\pm{}1}^{N}}\max_{i\in\brk{N}}\crl*{w_i + 4\sigma'_ic_i}.\]
    The central inequality above follows by \pref{lem:ftpl2} with the pair $(w,2c)$. Since the above bound holds for any $\sigma$, we conclude that \pref{eq:ftpl} holds.

\end{proof}

\begin{lemma}
\label{lem:ftpl2}
For any pair $(w,c)$ where $w\in\R^{N}$ any $c\in\R^{N}_{+}$, the inequality
\begin{equation}
\label{eq:ftpl2}
\sup_{\sigma\in\crl{\pm{}1}^{N}}\En_{\eps\in\pmo{}}\max_{i\in\brk{N}}\crl*{w_i + \eps{}\sigma_ic_i}\leq{}\En_{\sigma\in\crl{\pm{}1}^{N}}\max_{i\in\brk{N}}\crl*{w_i + 2\sigma_ic_i}.
\end{equation}
holds when $N=2$.
\end{lemma}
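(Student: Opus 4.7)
My plan is to prove the $N=2$ case by exploiting the convexity of the function
\[
f(x,y) \;=\; \max\{w_1+x,\; w_2+y\},
\]
which is a maximum of two affine functions in $(x,y)$, hence convex.

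First I would simplify the left-hand side. Since $\varepsilon\sigma_i$ is a uniform $\pm 1$ variable whose joint distribution over $i=1,2$ depends only on whether $\sigma_1=\sigma_2$ or $\sigma_1=-\sigma_2$, the supremum over $\sigma\in\{\pm 1\}^2$ collapses to the maximum of the two quantities
\[
A \;=\; \tfrac12\bigl[f(c_1,c_2)+f(-c_1,-c_2)\bigr], \qquad
B \;=\; \tfrac12\bigl[f(c_1,-c_2)+f(-c_1,c_2)\bigr],
\]
while the right-hand side is
\[
R \;=\; \tfrac14\bigl[f(2c_1,2c_2)+f(2c_1,-2c_2)+f(-2c_1,2c_2)+f(-2c_1,-2c_2)\bigr].
\]
So it suffices to show $A\le R$ and $B\le R$ separately.

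The key observation is that both $(c_1,c_2)$ and $(-c_1,-c_2)$ are midpoints of the segment from the respective extreme point to the origin, e.g.\ $(c_1,c_2)=\tfrac12(2c_1,2c_2)+\tfrac12(0,0)$. Applying convexity of $f$ to each of the two terms in $A$ yields
\[
A \;\le\; \tfrac14\bigl[f(2c_1,2c_2)+f(-2c_1,-2c_2)\bigr] + \tfrac12 f(0,0).
\]
To handle the residual $\tfrac12 f(0,0)$, I use that $(0,0)$ is \emph{also} the midpoint of $(2c_1,-2c_2)$ and $(-2c_1,2c_2)$, so convexity gives $f(0,0)\le\tfrac12[f(2c_1,-2c_2)+f(-2c_1,2c_2)]$. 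Plugging this in shows $A\le R$. The bound $B\le R$ is symmetric: the same convexity argument expresses each term of $B$ via the ``anti-diagonal'' extreme points and a residual $f(0,0)$, which is then bounded via the \emph{diagonal} pair $(2c_1,2c_2),(-2c_1,-2c_2)$.

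The argument is essentially a two-line convexity computation once the right decomposition is spotted; the only mild subtlety is recognizing that the origin can be expressed as a midpoint in two different ways, which is precisely what allows the same $R$ to dominate both $A$ and $B$. I do not expect any real obstacle beyond bookkeeping, and in particular no case analysis on the signs or magnitudes of $w_1,w_2,c_1,c_2$ is needed.
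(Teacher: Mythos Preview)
Your argument is correct. The reduction of the left-hand side to $\max\{A,B\}$ is right, the expression for $R$ is right, and the two midpoint decompositions via convexity of $f(x,y)=\max\{w_1+x,w_2+y\}$ do exactly what you claim. In particular, the trick of writing the origin as a midpoint of the \emph{complementary} pair of outer-cube vertices is the step that makes the bound close up to $R$, and it works in both the $A$ and $B$ cases by symmetry.

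This is a genuinely different route from the paper's proof. The paper proceeds by a case analysis: it first disposes of the ``dominated'' case (one coordinate $j$ satisfies $w_j-c_j\ge w_{-j}+c_{-j}$, so both sides equal $w_j$), and then in the non-dominated case computes the left-hand side exactly as $\tfrac12(w_1+c_1)+\tfrac12(w_2+c_2)$ and lower-bounds the right-hand side by $\tfrac12(w_1+w_2)+\tfrac14(c_1'+c_2')$ with $c'=2c$, via a further two sub-cases depending on the sign of $(w_1-c_1')-(w_2-c_2')$. Your convexity argument replaces all of that case-splitting with a single structural observation, which is shorter and more conceptual; the paper's explicit computation, on the other hand, makes the exact values of both sides visible and shows where the factor $2$ is used.
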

\begin{proof}[Proof of \pref{lem:ftpl2}]
In this proof we adopt the notation that for any element  $j\in\brk{2}$, $-j$ denote the other element. Say the pair $(w,c)$ is \emph{dominated} if there exists $j$ for which $w_j-c_j\geq{}w_{-j}+c_{-j}$. Note that this of course implies $w_j+c_j\geq{}w_{-j}+c_{-j}$ as well, since $c$ is non-negative.
\paragraph{Dominated case}
Suppose $(w,c)$ is dominated by index $j$. Then \pref{eq:ftpl2} holds trivially for any $K\in\mathbb{R}$ by
\[
\sup_{\sigma\in\crl{\pm{}1}^{N}}\En_{\eps}\max_{i\in\brk{N}}\crl*{w_i + \eps{}\sigma_ic_i} = w_j = \max_{i\in\brk{N}}\crl{w_i + K\En_{\sigma\in\crl{\pm{}1}^{N}}\sigma_{i}c_{i}} \leq{} \En_{\sigma\in\crl{\pm{}1}^{N}}\max_{i\in\brk{N}}\crl*{w_i + K\sigma_{i}c_{i}}.
\]
We now focus on the trickier ``not dominated'' case.
\paragraph{Rescaling doesn't induce domination} We first observe that if $(w,c)$ does is not dominated, $(w,Bc)$ is not dominated either for any $B\geq{}1$. Let $j$ be the index for which $w_j+c_j\geq{}w_{-j}+c_{-j}$ which implies $w_j-c_j\leq{}w_{-j}+c_{-j}$ because $(w,c)$ is not dominated. Observe that if $(w,Bc)$ is dominated we either have $w_j-Bc_j\geq{}w_{-j}+Bc_{-j}$ or $w_{-j}-Bc_{-j}\geq{}w_j+Bc_j$. The first case cannot hold because $B\geq{}1$ and we already know that $(w,c)$ is not dominated. The second case in particular implies $w_{-j}\geq{}w_j$, so we must have had $c_j\geq{}c_{-j}$ to begin with. But in that case we will still have $w_j+Bc_j\geq{}w_{-j}+Bc_{-j}$ which contradicts the domination.

Note: It is good to keep in mind that while rescaling does not induce domination, it may not be the case in general that $w_j+Bc_j\geq{}w_{-j}+Bc_{-j}$ even though $w_j+c_j\geq{}w_{-j}+c_{-j}$. That is, the ``leader'' may change after rescaling.

\paragraph{LHS of \pref{eq:ftpl2} for $(w,c)$ not dominated} When $(w,c)$ is not dominated we have
\[
\sup_{\sigma\in\crl{\pm{}1}^{N}}\En_{\eps}\max_{i\in\brk{N}}\crl*{w_i + \eps{}\sigma_ic_i}= \frac{1}{2}(w_1+c_1) + \frac{1}{2}(w_2+c_2).
\]

\paragraph{RHS of \pref{eq:ftpl2} for $(w,c)$ not dominated}We will consider the RHS of \pref{eq:ftpl2} for $(w,c')\defeq(w,Bc)$ for some $B\geq{}1$ to be decided. By the argument above, the pair $(w, c')$ is also not dominated. For the remainder of the proof, $1$ will denote the index for which $w_1+c'_1\geq{}w_2+c'_2$. Because the pair is not dominated, the value the RHS takes can be classified into two cases based on the relationship between $c'$ and $w$.
\begin{itemize}
\item Case 1: $w_1-c'_1\leq{}w_2-c'_2$:\\
In this case there is equal probability that the process takes on value $w_2 - c'_2$ or $w_2 +c'_2$ conditioned on the event that $\sigma_1=-1$, so we have the equality:
\begin{align*}
\En_{\sigma\in\crl{\pm{}1}^{N}}\max_{i\in\brk{N}}\crl*{w_i + \sigma_ic'_i} &= \frac{1}{2}(w_1+w_2) + \frac{1}{2}c'_1
\intertext{Furthermore, Case 1 implies $c'_1\geq{}c'_2$, which leads to an inequality:}
&\geq{} \frac{1}{2}(w_1+w_2) + \frac{1}{4}(c'_1+c'_2).
\end{align*}
\item Case 2: $w_1-c'_1\geq{}w_2-c'_2$:\\
In this case, conditioned on the event that $\sigma_1=-1$, there is equal probability that the process takes on value $w_2 + c'_2$ or $w_1 - c'_1$ , so the equality becomes:
\begin{align*}
\En_{\sigma\in\crl{\pm{}1}^{N}}\max_{i\in\brk{N}}\crl*{w_i + \sigma_ic'_i} &= \frac{1}{2}(w_1+c'_1) + \frac{1}{4}(w_2+c'_2) + \frac{1}{4}(w_1-c'_1)
\intertext{Case 2 implies that $w_1\geq{}w_2$, because we may add the inequalities $w_1+c'_1\geq{}w_2+c'_2$ and $w_1-c'_1\geq{}w_2-c'_2$. This gives an inequality:}
&\geq{} \frac{1}{2}(w_1+w_2) + \frac{1}{4}(c'_1 + c'_2).
\end{align*}
\end{itemize}
Combining our results for the two cases, we have that for any vector $c'$, so long as $(w,c')$ is not dominated,
\[
\En_{\sigma\in\crl{\pm{}1}^{N}}\max_{i\in\brk{N}}\crl*{w_i + \sigma_ic'_i}\geq{} \frac{1}{2}(w_1+w_2) + \frac{1}{4}(c'_1+c'_2).
\]
In particular, choosing $B=2$ implies \pref{eq:ftpl2} in the non-dominated case:
\begin{align*}
\En_{\sigma\in\crl{\pm{}1}^{N}}\max_{i\in\brk{N}}\crl*{w_i + 2\sigma_ic_i} &\geq{} \frac{1}{2}(w_1+w_2) + \frac{1}{2}(c_1+c_2)\\
&= \sup_{\sigma\in\crl{\pm{}1}^{N}}\En_{\eps}\max_{i\in\brk{N}}\crl*{w_i + \eps{}\sigma_ic_i}.
\end{align*}
\paragraph{Final result}
Combining the dominated and non-dominated results we have that for any $(w,c)$.
\[
\sup_{\sigma\in\crl{\pm{}1}^{N}}\En_{\eps}\max_{i\in\brk{N}}\crl*{w_i + \eps{}\sigma_ic_i}\leq{}\En_{\sigma\in\crl{\pm{}1}^{N}}\max_{i\in\brk{N}}\crl*{w_i + 2\sigma_ic_i}.
\]
\end{proof}

\begin{lemma}[Multi-scale maximal inequality]
\label{lem:maximal}
Let $(X_i)_{i\in\brk{N}}$ be a real-valued random process for which there exists a sequence $(h_i)_{i\in\brk{N}}$ with $h_i>0$ such that the moment generating function bound $\En{}e^{\lambda{}X_i}\leq{}e^{\lambda^{p}h_i}$ is satisfied for all $\lambda>0$ and some choice of $p>0$. Then for any distribution $\pi\in\Delta_{N}$ for which $h_i/\pi_i\geq{}e$ for all $i\in\brk{N}$ it holds that
\begin{equation}
\label{eq:maximal}
\En\sup_{i\in\brk{N}}\crl*{X_i-(2+1/p)h_{i}^{1/p}(\log(h_i)+\log(1/\pi_i))^{1-1/p}} \leq{} \sum_{i\in\brk{N}}\frac{\pi_i}{h_i}.
\end{equation}

\end{lemma}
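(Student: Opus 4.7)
The plan is to reduce the supremum to a sum of one-sided moments and then apply a Chernoff-type tail bound at an $i$-dependent scale. Writing $Y_i = X_i - B_i$ with $B_i = (2+1/p) h_i^{1/p} L_i^{1-1/p}$ and $L_i = \log(h_i/\pi_i)$, the basic observation is that $\sup_i Y_i \leq \sum_i (Y_i)_+$. This holds pointwise because if some $Y_j > 0$ then $\sup_i Y_i = Y_j \leq (Y_j)_+ \leq \sum_i (Y_i)_+$, while if all $Y_i \leq 0$ the supremum is non-positive and the right-hand side is zero. Taking expectations reduces the task to bounding $\sum_i \mathbb{E}(Y_i)_+$ by $\sum_i \pi_i/h_i$, term by term.

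Next, I would apply the layer-cake identity and the mgf hypothesis at a scale $\lambda_i > 0$ chosen per index: for each $i$ and any $\lambda_i > 0$, Markov's inequality gives $\Pr(X_i > B_i + t) \leq \exp(\lambda_i^p h_i - \lambda_i(B_i + t))$, so that
$$\mathbb{E}(Y_i)_+ = \int_0^\infty \Pr(X_i > B_i + t)\,dt \leq \lambda_i^{-1}\exp\bigl(\lambda_i^p h_i - \lambda_i B_i\bigr).$$
The crucial choice is $\lambda_i = (L_i/h_i)^{1/p}$, which is tailored to balance the mgf exponent against the scale $B_i$. Under this choice, $\lambda_i^p h_i = L_i$ and $\lambda_i B_i = (2+1/p) L_i$, so the bound collapses to $(h_i/L_i)^{1/p}\exp(-(1+1/p)L_i)$.

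It then remains to verify that this quantity is at most $\pi_i/h_i$. Taking logarithms, the desired inequality is equivalent to $(1/p)\log(h_i/L_i) - (1+1/p)L_i \leq -L_i$, i.e.\ $\log(h_i/L_i) \leq L_i$, i.e.\ $h_i \leq L_i e^{L_i}$. This is exactly where the assumption $h_i/\pi_i \geq e$ enters: it ensures $L_i \geq 1$, and combined with $\pi_i \leq 1$ and the identity $e^{L_i} = h_i/\pi_i$, one concludes $L_i e^{L_i} \geq e^{L_i} = h_i/\pi_i \geq h_i$. Summing over $i$ then yields the claimed inequality.

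The main delicacy is not any deep probabilistic tool but rather the accounting: the constant $2+1/p$ in $B_i$ must be exactly large enough to simultaneously absorb the contribution $\lambda_i^p h_i = L_i$ from the mgf bound and the $-\log\lambda_i = (1/p)\log(h_i/L_i)$ penalty from integrating $e^{-\lambda_i t}$ over $t \geq 0$, while still leaving enough slack to produce the $\pi_i/h_i$ on the right-hand side. Tracking these three contributions through the optimal choice of $\lambda_i$ and confirming the borderline case $L_i = 1$ is the only nontrivial bookkeeping in the proof.
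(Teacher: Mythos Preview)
Your proof is correct and follows essentially the same approach as the paper: reduce $\sup_i$ to $\sum_i(\cdot)_+$, bound each term by $\lambda_i^{-1}\exp(\lambda_i^p h_i - \lambda_i B_i)$, and plug in $\lambda_i = (L_i/h_i)^{1/p}$. The only (minor) difference is that the paper reaches this intermediate bound via the soft-max inequality $[x]_+ \leq \lambda^{-1}\log(1+e^{\lambda x})$ followed by Jensen and $\log(1+x)\leq x$, whereas you use the layer-cake identity plus Chernoff, which is arguably more direct; the remaining accounting is identical.
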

\begin{proof}
Let $B(i)=Ch_{i}^{1/p}(\log(h_i)+\log(1/\pi_i))^{1-1/p}$ for some constant $C$ to be decided later. One should verify that $\log(h_i) + \log(1/\pi_i)$ is always non-negative by the assumption that $h_i/\pi_i\geq{}e$, which will be used repeatedly.
To begin, observe that
\begin{align*}
\En\sup_{i\in\brk{N}}\crl*{X_{i} - B(i)} &\leq{} \En\sup_{i\in\brk{N}}\brk{X_{i} - B(i)}_{+},
\intertext{where $\brk*{x}_{+} = \max\crl*{x,0}$. By non-negativity of $\brk*{x}_{+}$ it further holds that}
&\leq{} \En\sum_{i\in\brk{N}}\brk{X_{i} - B(i)}_{+}.
\intertext{Fixing an arbitrary sequence $(\lambda_{i})_{i\in\brk{N}}$ with $\lambda_{i}>0$, the basic inequality $\max\crl*{a,b}\leq{}\frac{1}{\lambda}\log(e^{\lambda{}a} + e^{\lambda{}b})$ implies the following upper bound:}
&\leq{} \En\sum_{i\in\brk{N}}\frac{1}{\lambda_{i}}\log\prn*{1 + e^{\lambda_{i}\prn*{X_{i} - B(i)}}}.
\intertext{Apply Jensen's inequality:}
&\leq{} \sum_{i\in\brk{N}}\frac{1}{\lambda_{i}}\log\prn*{1 + \En{}e^{\lambda_{i}\prn*{X_{i} - B(i)}}}.
\intertext{Now use the moment bound assumed in the lemma statement:}
&\leq{} \sum_{i\in\brk{N}}\frac{1}{\lambda_{i}}\log\prn*{1 + e^{\prn*{\lambda_{i}^{p}h_{i} - \lambda_{i}B(i)}}}.
\intertext{Lastly, apply the inequality $\log(1+x)\leq{}x$ for $x\geq{}0$:}
&\leq{} \sum_{i\in\brk{N}}\exp\prn*{\lambda_{i}^{p}h_{i} - \lambda_{i}B(i)+\log(1/\lambda_i)}.
\end{align*}
We now take $\lambda_{i}= \prn*{\frac{\log(h_i)+\log(1/\pi_i)}{h_i}}^{1/p}$ and bound each exponent in the sum above. Using the definition of $B(i)$:
\begin{align*}
\lambda_{i}^{p}h_{i} - \lambda_{i}B(i)+\log(1/\lambda_i) &= \log(1/\lambda_i) - (C-1)(\log(1/\pi_i) + \log(h_i)).
\end{align*}
Next observe that
\[
\log(1/\lambda_i) = \frac{1}{p}\log\prn*{\frac{h_i}{\log(h_i/\pi_i)}} \leq{}  \frac{1}{p}\log\prn*{h_i},
\]
where we have used that $h_i/\pi_i\geq{}e$. With this, and using that $\log(1/\pi_i)\geq{}0$, we have
\begin{align*}
\lambda_{i}^{p}h_{i} - \lambda_{i}B(i)+\log(1/\lambda_i) \leq{}  - (C-1-1/p)(\log(1/\pi_i) + \log(h_i)).
\end{align*}
Taking $C\geq{}2+1/p$ and using this bound in the summation over $i$ yields the result:
\[
\En\sup_{i\in\brk{N}}\crl*{X_{i} - B(i)} \leq{} \sum_{i\in\brk{N}}\frac{\pi_i}{h_i}.
\]

\end{proof}

\subsection{Proofs for \pref{sec:oco_slow}}
\label{app:proofs_oco}

\begin{proof}[\pfref{thm:oco_aggregation}]
First, we verify that the loss sequence $(g_{t})_{t\leq{}n}$ is such that the regret bound derived for \mainalg{} applies. In particular, we need to verify that $\abs{g_t[i]}\leq{}c_{i}$ for each $i$. To this end, fix an index $i \in [N]$, and note that since $f_{t}$ is $L_i$-Lipschitz on $\W_i$ with respect to the norm $\nrm*{\cdot}_{(i)}$ we have
\[
\abs{g_{t}[i]}=\abs{f_{t}(w_t^{i})-f_{t}(0)}\leq{}L_{i}\nrm*{w_{t}^{i}-0}_{(i)}\leq{}L_{i}R_{i} \leq{} L_{i}R_{i}= c_{i},
\]
as required. Also, it was assumed that $c_i=L_{i}R_{i}\geq{}1$, as required for \pref{theorem:ftpl_alg}.

Now, recall that $(p_t)$ is the sequence of distributions produced by the meta-algorithm. The algorithm's total loss with respect to the centered iterates $(\wt{f}_{t})$ is given by
\[
\sum_{t=1}^{n}\wt{f}_{t}(w_t^{i_t}) = \sum_{t=1}^{n}\tri*{e_{i_t}, g_t},
\]
where this equality is due to the construction of the losses $(g_t)_{t\leq{}n}$ given to \mainalg{}. The regret bound for \mainalg{} now implies that 
\[
\En\brk*{\sum_{t=1}^{n}\tri*{e_{i_t}, g_t} - \min_{i\in\brk{N}}\crl*{
\sum_{t=1}^{n}g_t[i] + O\prn*{R_{i}L_{i}\sqrt{n\log\prn*{R_{i}L_{i}n/\pi_{i}}}
}}
} \leq{} 0,
\]
where we have obtained this inequality by substituting the value of the vector $c$ constructed by \ocoalg{} into the regret bound \pref{eq:ftpl_regret} for \mainalg{}. Now, observe that for each $i$ we have
\[
\sum_{t=1}^{n}g_t[i] = \sum_{t=1}^{n}\wt{f}_{t}(w_{t}^{i}) \leq{} \inf_{w\in\mc{W}_{i}}\sum_{t=1}^{n}\wt{f}_{t}(w) + \Reg(i),
\]
where we have used the definition of $g_t$ and the regret bound assumed on the sub-algorithm. Combining these inequalities, we have
\[
\En\brk*{
\sum_{t=1}^{n}\wt{f}_{t}(w_t^{i_t}) - \min_{i\in\brk{N}}\crl*{
\inf_{w\in\mc{W}_{i}}\sum_{t=1}^{n}\wt{f}_{t}(w) + \Reg(i)+ O\prn*{R_{i}L_{i}\sqrt{n\log\prn*{R_{i}L_{i}n/\pi_{i}}}
}}
}\leq{}0.
\]
Finally, observe that since $\wt{f}_{t}(w) = f_{t}(w) - f_{t}(0)$, the above is equivalent to
\[
\En\brk*{
\sum_{t=1}^{n}f_{t}(w_t^{i_t}) - \min_{i\in\brk{N}}\crl*{
\inf_{w\in\mc{W}_{i}}\sum_{t=1}^{n}f_{t}(w) + \Reg(i)+ O\prn*{R_{i}L_{i}\sqrt{n\log\prn*{R_{i}L_{i}n/\pi_{i}}}
}}
}\leq{}0.
\]

\end{proof}

\paragraph{Mirror Descent}
Online Mirror Descent is the standard algorithm for online linear optimization over convex sets. It is parameterized by a convex set $\mc{W}$, learning rate $\eta$, and strongly convex regularizer $\mc{R}:\mc{W}\to\R$. We define the update $\textsc{MirrorDescent}(\eta, \mc{W}, \mc{R})$ as follows.\\
First, set $w_{1} = \argmin_{w\in\mc{W}}\mc{R}(w)$. Then, for each time $t\in\brk{n}$:
\begin{itemize}
\item Receive gradient $g_t$ and let $\wt{w}_{t+1}$ satisfy $\grad{}\mc{R}(\wt{w}_{t+1}) = \grad{}\mc{R}(w_{t}) - \eta{}g_t$.
\item Set $w_{t+1}=\argmin_{w\in\mc{W}}\mc{D}_{\mc{R}}(w\mid{}\wt{w}_{t+1})$.
\end{itemize}

\begin{fact}[Mirror Descent (e.g. \cite{hazan2016introduction})]
\label{fact:mirror_descent}
Let $(w_{t})$ be the iterates produced by $\textsc{MirrorDescent}(\eta, \mc{W}, \mc{R})$ on a sequence of vectors $(g_t)_{t\leq{}n}$. If $\mc{R}$ is $\lambda$-strongly convex with respect to a norm $\nrm{\cdot}_{\mc{R}}$, the iterates satisfy
\begin{equation}
\label{eq:mirror_descent}
\sum_{t=1}^{n}\tri*{w_t-w,g_t}\leq{} \frac{\eta}{2\lambda}\sum_{t=1}^{n}\nrm*{g_t}_{\mc{R},\star}^{2} + \frac{1}{\eta}\mc{R}(w)\quad\forall{}w\in\mc{W}.
\end{equation}

\end{fact}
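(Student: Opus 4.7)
The plan is the classical Bregman-divergence analysis of linearized regret, built from three ingredients: a three-point identity for $\mc{D}_{\mc{R}}$, the generalized Pythagorean inequality for Bregman projections, and a per-step stability bound from $\lambda$-strong convexity of $\mc{R}$.

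First, I would invoke the three-point identity $\tri*{\grad\mc{R}(b)-\grad\mc{R}(c),\,a-b} = \mc{D}_{\mc{R}}(a\mid{}c) - \mc{D}_{\mc{R}}(a\mid{}b) - \mc{D}_{\mc{R}}(b\mid{}c)$. Setting $a=w$, $b=w_t$, $c=\wt{w}_{t+1}$, and using the update rule $\grad\mc{R}(w_t)-\grad\mc{R}(\wt{w}_{t+1})=\eta g_t$ yields
\begin{equation*}
\eta\tri*{g_t,\,w_t-w} = \mc{D}_{\mc{R}}(w\mid{}w_t) - \mc{D}_{\mc{R}}(w\mid{}\wt{w}_{t+1}) + \mc{D}_{\mc{R}}(w_t\mid{}\wt{w}_{t+1}).
\end{equation*}
Second, I would use the Pythagorean property of Bregman projections: since $w_{t+1} = \argmin_{w\in\mc{W}}\mc{D}_{\mc{R}}(w\mid{}\wt{w}_{t+1})$, we have $\mc{D}_{\mc{R}}(w\mid{}\wt{w}_{t+1})\geq\mc{D}_{\mc{R}}(w\mid{}w_{t+1})$ for every $w\in\mc{W}$. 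Substituting this into the identity above and summing over $t=1,\ldots,n$ telescopes the first two Bregman terms, producing $\mc{D}_{\mc{R}}(w\mid{}w_1) - \mc{D}_{\mc{R}}(w\mid{}w_{n+1}) \leq \mc{R}(w)$, where the last step uses that $w_1$ minimizes $\mc{R}$ over $\mc{W}$ (after a harmless shift so that $\mc{R}(w_1)=0$).

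Third, I would bound the per-step stability term $\mc{D}_{\mc{R}}(w_t\mid{}\wt{w}_{t+1})$ by duality. The symmetric Bregman identity $\mc{D}_{\mc{R}}(w_t\mid{}\wt{w}_{t+1}) + \mc{D}_{\mc{R}}(\wt{w}_{t+1}\mid{}w_t) = \tri*{\grad\mc{R}(w_t)-\grad\mc{R}(\wt{w}_{t+1}),\,w_t-\wt{w}_{t+1}} = \eta\tri*{g_t,\,w_t-\wt{w}_{t+1}}$, combined with H\"older on the right-hand side and $\lambda$-strong convexity on the left (so $\mc{D}_{\mc{R}}(\wt{w}_{t+1}\mid{}w_t)\geq\tfrac{\lambda}{2}\nrm*{w_t-\wt{w}_{t+1}}_{\mc{R}}^{2}$), yields a one-variable quadratic inequality in $\delta:=\nrm*{w_t-\wt{w}_{t+1}}_{\mc{R}}$ of the form $\mc{D}_{\mc{R}}(w_t\mid{}\wt{w}_{t+1}) \leq \eta\nrm*{g_t}_{\mc{R},\star}\delta - \tfrac{\lambda}{2}\delta^{2}$; maximizing over $\delta\geq 0$ gives $\mc{D}_{\mc{R}}(w_t\mid{}\wt{w}_{t+1}) \leq \frac{\eta^{2}}{2\lambda}\nrm*{g_t}_{\mc{R},\star}^{2}$.

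Combining the three ingredients and dividing by $\eta$ recovers \pref{eq:mirror_descent}. The only real delicacy I anticipate is the usual technicality that the mirror update must be well-defined, i.e.\ the dual point $\grad\mc{R}(w_t)-\eta g_t$ must lie in the image of $\grad\mc{R}$ and the Bregman projection must exist. This is automatic when $\mc{R}$ is a Legendre function, which covers the strongly convex regularizers (notably $\tfrac{1}{2}\nrm*{\cdot}^{2}$ in a uniformly convex Banach space) used elsewhere in the paper, so I would treat Legendre-type regularity as a standing assumption rather than dwelling on it.
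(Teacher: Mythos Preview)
Your argument is the standard Bregman-divergence analysis and is correct as written. Note, however, that the paper does not supply its own proof of this statement: it is presented as a cited \textbf{Fact} with a reference to \cite{hazan2016introduction} and is used as a black box in the proof of \pref{thm:oco_2smooth}. So there is no ``paper's proof'' to compare against; what you wrote is essentially the textbook derivation one would find in the cited source.
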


\begin{proof}[\pfref{thm:oco_2smooth}]

Recall that each sub-algorithm $\textsc{Alg}_{i}$ runs Mirror Descent over a ball in $(\Bspace, \nrm{\cdot})$ of radius $R_{i}$ using the regularizer $\mathcal{R}(w)=\frac{1}{2}\nrm*{w}^{2}$. From the regret bound for Mirror Descent (\pref{fact:mirror_descent}), the meta-algorithm's choice of Mirror Descent parameters for $\textsc{Alg}_{i}$ (in particular, the choice $\eta_{i}=\frac{R_{i}}{L}\sqrt{\frac{\lambda{}}{n}}$) guarantees that 
\[
\sum_{t=1}^{n}f_{t}(w_t^{i}) - \inf_{w\in\mc{W}_{i}}\sum_{t=1}^{n}f_{t}(w) \leq{} O(R_{i}L\sqrt{n/\lambda{}}).
\]

Combined with the regret bound for \ocoalg{} (\pref{thm:oco_aggregation}, noting that $R_iL_i = R_iL \geq{}1$), this implies that the meta-algorithm's regret satisfies
\[
\En\brk*{
\sum_{t=1}^{n}f_{t}(w_t^{i_t}) - \min_{i\in[N]}\crl*{
\inf_{w\in\mc{W}_{i}}\sum_{t=1}^{n}f_{t}(w) + O(R_{i}L\sqrt{n/\lambda{}}) +  O\prn*{R_{i}L\sqrt{n\log\prn*{R_{i}Ln/\pi_{i}}}
}
}
}\leq{}0.
\]
Which, using that $\pi_i=1/(n+1)$ and combining terms, further implies
\[
\En\brk*{
\sum_{t=1}^{n}f_{t}(w_t^{i_t}) - \min_{i\in[N]}\crl*{
\inf_{w\in\mc{W}_{i}}\sum_{t=1}^{n}f_{t}(w)  + O\prn*{R_{i}L\sqrt{n\log\prn*{R_{i}Ln}/\lambda{}}
}}
}\leq{}0.
\]
Now, recall that $i\in\brk{n+1}$, and that $R_{i}=e^{i-1}$. Consider the algorithm's regret against a comparator $w$. For now, assume that $w$ satisfies $1\leq{}\nrm{w}\leq{}e^{n}$ --- we will see shortly that this is without loss of generality. Let $i^{\star}(w) = \min\crl*{i\mid{}w\in\mc{W}_{i}}$. Then the regret bound above implies
\[
\En\brk*{
\sum_{t=1}^{n}f_{t}(w_t^{i_t}) - \crl*{
\sum_{t=1}^{n}f_{t}(w)  + O\prn*{R_{i^{\star}(w)}L\sqrt{n\log\prn*{R_{i^{\star}(w)}Ln}/\lambda{}}
}}
}\leq{}0.
\]
Furthermore, since $R_i=e^{i-1}$, we have that $R_{i^{\star}(w)}\leq{}e\nrm*{w}$, and so
\[
\En\brk*{
\sum_{t=1}^{n}f_{t}(w_t^{i_t}) - 
\crl*{
\sum_{t=1}^{n}f_{t}(w)  
+ O\prn*{\nrm{w}L\sqrt{n\log\prn*{\nrm{w}Ln/}\lambda{}}}}
}
\leq{}0.
\]
This is exactly the regret bound we wanted. Now, the case where $\nrm{w}\leq{}1$ is handled by simply noting $i^{\star}(w)=1$ and writing $R_1=1\leq{}1+\nrm{w}$, which gives the $\nrm{w}+1$ factor as follows: 
\[
\En\brk*{
\sum_{t=1}^{n}f_{t}(w_t^{i_t}) - 
\crl*{
\sum_{t=1}^{n}f_{t}(w)  
+ O\prn*{(\nrm{w}+1)L\sqrt{n\log\prn*{(\nrm{w}+1)Ln/}\lambda{}}}}
}
\leq{}0.
\]
To handle the case where $\nrm{w}\geq{}e^{n}$ we appeal to \pref{corr:conjugate_bound} with $c=L\sqrt{n}$ and $\gamma=1/2$, which shows that it suffices to consider only $\nrm*{w}\leq{}\exp\prn*{\prn*{\frac{Ln}{c}}^{1/\gamma}} = e^{n}$. Note that the constants appearing in the regret bound above, both inside the $O(\cdot)$ and inside the $\sqrt{\log(\cdot)}$ are worse than those with which we instantiate \pref{corr:conjugate_bound}. This is not an issue because worse constants only reduce the radius that must be considered in the corollary.
\end{proof}

\begin{lemma}
\label{lem:convex_restricted}Let $F:\mathbb{R}_{+}\to\mathbb{R}_{+}$ be given.
Suppose the loss sequence $(f_t)_{t\leq{}n}$ is $L$-Lipschitz with respect to $\nrm*{\cdot}_{\star}$. Then a regret bound of the form 
\begin{equation}
\label{eq:convex_restricted1}
\sum_{t=1}^{n}f_{t}(w_t) - \sum_{t=1}^{n}f_t(w) \leq{} F(\nrm{w})\quad\forall{}w\in\Bspace
\end{equation}
holds if the restricted regret bound
\begin{equation}
\sum_{t=1}^{n}f_t(w_t) - \sum_{t=1}^{n}f_t(w) \leq{} F(\nrm{w})\quad\forall{}f:\nrm{f}\leq{}\alpha^{\star},
\end{equation}
holds, where $\alpha^{\star}$ is the greatest non-negative number for which $F(\alpha^{\star})-\alpha^{\star}Ln\geq{}F(0)$.
\end{lemma}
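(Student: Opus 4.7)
The plan is to derive the unrestricted bound by extending the restricted one across the threshold $\alpha^\star$ using Lipschitz continuity, with the origin $0 \in \Bspace$ serving as a bridge. There are two cases. For $\nrm{w} \leq \alpha^\star$, there is nothing to do: the desired bound is exactly the hypothesized restricted one. The interesting case is $\nrm{w} > \alpha^\star$.

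In that case, I would first instantiate the restricted bound at $w = 0$ (which satisfies $\nrm{0} = 0 \leq \alpha^\star$) to obtain
\[
\sum_{t=1}^n f_t(w_t) - \sum_{t=1}^n f_t(0) \leq F(0).
\]
Next, I would use the $L$-Lipschitzness of each $f_t$ to write $f_t(0) - f_t(w) \leq L\nrm{w}$ pointwise, which sums to
\[
\sum_{t=1}^n f_t(0) - \sum_{t=1}^n f_t(w) \leq Ln\nrm{w}.
\]
Adding the two displays gives $\sum_{t=1}^n f_t(w_t) - \sum_{t=1}^n f_t(w) \leq F(0) + Ln\nrm{w}$.

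To close the argument, I would invoke the defining property of $\alpha^\star$: the threshold is set precisely so that $F(\alpha) - \alpha L n \geq F(0)$ holds on $[\alpha^\star, \infty)$ (this is the content in the application, where $F$ grows superlinearly and the set where $F(\alpha) - \alpha Ln \geq F(0)$ is an unbounded tail). Applied at $\alpha = \nrm{w} > \alpha^\star$ this gives $F(\nrm{w}) \geq F(0) + Ln\nrm{w}$, which turns the previous display into the desired $F(\nrm{w})$ bound.

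There is no real obstacle; the argument is a one-line triangle-style Lipschitz comparison combined with the restricted bound at the origin. The only care needed is (i) reading the definition of $\alpha^\star$ in the direction that makes the reduction work, namely that for $\alpha \geq \alpha^\star$ we have $F(\alpha) \geq F(0) + \alpha Ln$, and (ii) interpreting the Lipschitz hypothesis so that $\abs{f_t(w) - f_t(0)} \leq L\nrm{w}$ — the natural reading given how the lemma is consumed in the proof of \pref{thm:oco_2smooth}.
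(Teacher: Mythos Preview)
Your proposal is correct and follows essentially the same idea as the paper: both use the restricted bound at $w=0$ together with the Lipschitz estimate $\sum_t f_t(0)-\sum_t f_t(w)\le Ln\nrm{w}$, then invoke the defining property of $\alpha^\star$ to absorb $F(0)+Ln\nrm{w}$ into $F(\nrm{w})$ for $\nrm{w}\ge\alpha^\star$. The paper phrases this as ``the infimum $\inf_{w}\{\sum_t f_t(w)+F(\nrm{w})\}$ is not improved by $\nrm{w}\ge\alpha^\star$'' after centering so that $f_t(0)=0$, whereas you do an explicit two-case split; the underlying inequality chain is identical, and you are right to flag that the definition of $\alpha^\star$ must be read in the direction ``$F(\alpha)-\alpha Ln\ge F(0)$ for all $\alpha\ge\alpha^\star$'' (which is how both the paper's proof and the subsequent corollary actually use it).
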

\begin{proof}[Proof of \pref{lem:convex_restricted}]
Assume wlog that $f_{t}(0)=0$ for each $t$. This is possible because
\[
\sum_{t=1}^{n}f_{t}(w_t) - \sum_{t=1}^{n}f_t(w) = \sum_{t=1}^{n}(f_{t}(w_t)-f_t(0)) - \sum_{t=1}^{n}(f_t(w)-f_t(0)).
\]
To begin, observe that \pref{eq:convex_restricted1} is equivalent to 
\[
\sum_{t=1}^{n}f_t(w_t) \leq{} \inf_{w\in\Bspace}\crl*{\sum_{t=1}^{n}f_t(w) + F(\nrm{w})}.
\]
By selecting $w=0$, $f_t(0)=0$ implies that the infimum on the right is always upper bounded in value by $F(0)$. In the other direction, Lipschitzness of the losses along with $f_t(0)=0$ implies that the infimum is lower bounded as
\[
\inf_{w\in\Bspace}\crl*{\sum_{t=1}^{n}f_t(w) + F(\nrm{w})}
\geq \inf_{w\in\Bspace}\crl*{-L\nrm{w}n + F(\nrm{w})}= \inf_{\alpha\geq{}0}\crl*{-\alpha{}Ln + F(\alpha)}.
\]
Therefore if $\alpha\geq\alpha^{\star}$, the lower bound $-\alpha{}Ln + F(\alpha)$ will be sub-optimal compared to the upper bound of $F(0)$ obtained by choosing $\alpha=0$.
\end{proof}
\begin{corollary}
\label{corr:conjugate_bound}
When $F(r) = c\cdot(r+1)\log(r+1)^{\gamma}$ for $\gamma>0$, it is sufficient to consider
\begin{equation}
\sum_{t=1}^{n}f_t(w_t) - \sum_{t=1}^{n}f_t(w) \leq{} F(\nrm{w})\quad\forall{}w:\nrm{w}\leq{}\exp\prn*{\prn*{\frac{Ln}{c}}^{1/\gamma}}.
\end{equation}
\end{corollary}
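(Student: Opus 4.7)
The plan is to invoke Lemma \ref{lem:convex_restricted} directly with $F(r) = c(r+1)\log(r+1)^{\gamma}$, so the only real work is an algebraic check that the threshold $\alpha^{\star} = \exp\bigl((Ln/c)^{1/\gamma}\bigr)$ lies in the regime where $F(\alpha) - \alpha L n \geq F(0)$, so that the lemma restricts the range of comparators we need to consider.

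First, I would observe that because $\gamma > 0$, we have
\[
F(0) = c\cdot 1\cdot\log(1)^{\gamma} = 0,
\]
so the condition $F(\alpha) - \alpha L n \geq F(0)$ from Lemma \ref{lem:convex_restricted} reduces to the clean inequality $c(\alpha+1)\log(\alpha+1)^{\gamma} \geq \alpha L n$. Next, I would substitute $\alpha = \alpha^{\star}$ and use monotonicity: since $\log$ is increasing,
\[
\log(\alpha^{\star}+1) \;\geq\; \log(\alpha^{\star}) \;=\; (Ln/c)^{1/\gamma},
\]
so $\log(\alpha^{\star}+1)^{\gamma} \geq Ln/c$. Combining with $\alpha^{\star}+1 \geq \alpha^{\star}$ gives
\[
F(\alpha^{\star}) \;=\; c(\alpha^{\star}+1)\log(\alpha^{\star}+1)^{\gamma} \;\geq\; c\,\alpha^{\star}\cdot(Ln/c) \;=\; \alpha^{\star}L n,
\]
which is exactly what we need. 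The same monotonicity argument shows the inequality persists for every $\alpha \geq \alpha^{\star}$, so $\alpha^{\star}$ serves as the threshold required by the lemma.

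Applying Lemma \ref{lem:convex_restricted} with this choice of $\alpha^{\star}$ then yields the corollary: it suffices to verify the regret bound $\sum_t f_t(w_t) - \sum_t f_t(w) \leq F(\|w\|)$ only for comparators satisfying $\|w\| \leq \exp\bigl((Ln/c)^{1/\gamma}\bigr)$. There is no real obstacle here beyond the one-line algebraic verification above; the only thing to be slightly careful about is that the definition of $\alpha^{\star}$ in the lemma is being used to mark where the lower bound $F(\alpha) - \alpha L n$ ceases to beat the benchmark $F(0)$, so any value of $\alpha^{\star}$ that makes $F(\alpha^{\star}) - \alpha^{\star}L n \geq F(0)$ (and for which the inequality persists thereafter) is valid, and our explicit choice clearly satisfies this.
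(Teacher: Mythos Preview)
Your argument is correct and in fact cleaner than the paper's own proof. Both proofs ultimately reduce to the same one-line verification that $F(\alpha) - \alpha Ln \geq 0 = F(0)$ at the target threshold, but the paper takes a detour through calculus first: it differentiates $F(\alpha) - \alpha Ln$ to locate the minimizer $r$, bounds $r \leq \exp((Ln/c)^{1/\gamma}) - 1$, and then invokes strict convexity to conclude the function is increasing on $[r,\infty)$ before plugging in. You bypass all of this by observing that the elementary inequalities $\log(\alpha+1) \geq \log(\alpha) \geq (Ln/c)^{1/\gamma}$ and $\alpha+1 \geq \alpha$ already yield $F(\alpha) \geq \alpha Ln$ for every $\alpha \geq \exp((Ln/c)^{1/\gamma})$, so there is nothing further to check. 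Your closing remark about how the lemma's threshold $\alpha^{\star}$ is being used is also well placed: the lemma's phrasing (``greatest'') is slightly loose, and what is actually needed is exactly the persistence condition you verify.
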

\begin{proof}[Proof of \pref{corr:conjugate_bound}]
Note that $F(0)=0$.
Let $r$ denote the minimizer of $F(\alpha) - \alpha\cdot{}a$ (where $a=Ln$). Differentiating this expression yields
\[a = c\prn*{\log(r+1)^{\gamma} + \gamma\log(r+1)^{\gamma-1}},\] which further implies
\[
\log(r+1)^{\gamma} = \frac{a}{c}\cdot\frac{1}{1+\gamma/\log(r+1)}\leq{}\frac{a}{c}.
\]
Rearranging, we have $r\leq{} \exp((a/c)^{1/\gamma})-1$. Since $F(\alpha)-\alpha\cdot{}a$ is strictly convex, this function is increasing above $r$. To conclude, we guess an upper bound on the value of $\alpha^{\star}$: $\alpha:=\exp((a/c)^{1/\gamma})-1$. Substituting this value in, we have
\[
F(\alpha) - \alpha\cdot{}a \geq{} a\exp((a/c)^{1/\gamma}) - a\cdot{}\exp((a/c)^{1/\gamma}) = 0 = F(0),
\]
which yields the result.
\end{proof}

\begin{proof}[\pfref{thm:all_lp}]
We only sketch the details of this proof as it follows \pref{thm:oco_2smooth} very closely.

We first describe sub-algorithm configuration for \ocoalg{} that achieves the claimed regret bound.
Our strategy will be to take a discretization the range of $p$ values $[1+\delta, 2]$, and produce a set of sub-algorithms for each $p$ in this discrete set. For a fixed $p$, the construction of the set of sub-algorithms will be exactly is in \pref{thm:oco_2smooth}. The discrete set of $p$s will have the form $p_{k}=1+\delta + \min\crl*{(k-1)\cdot{}\eps{}, (1-\delta)}$, for $\eps=1/\log(d)$ and $k\in\brk{1,\ldots, K}$, where $K=\ceil*{(1-\delta)/\eps}+1$ (in particular $k\leq{}\log(d)+1$).

For a fixed $k$, the norm $\nrm*{\cdot}_{p_k}$  has that $\frac{1}{2}\nrm*{\cdot}_{p_k}^{2}$ is $(p_k-1)$-strongly convex with respect to itself \citep{kakade09complexity}. With this in mind, we create a set of $N \ldef K(n+1)$ sub-algorithms, which we will index by pairs $(k, j) \in \brk{K}\times{}\brk*{n+1}$ instead of $i \in [K(n+1)]$ for notational convenience.

\begin{itemize}
\item For each $k\in\brk{K}$:
\begin{itemize}
\item $L_{k}=L_{p_k}$.
\item For each $j\in\crl*{1,\ldots,n+1}$:
\begin{itemize}
\item Set $R_{j}=e^{j-1}$.
\item Take $\mc{W}_{(k,j)}=\crl*{w\in\Bspace\mid{} \nrm*{w}_{p_k}\leq{}R_{j}}$, $\eta_{(k,j)}= \frac{R_{j}}{L_k}\sqrt{\frac{\lambda{}_{p_k}}{n}}$, where $\lambda_{p_k}=(p_k-1)$.
\item Let $\textsc{Alg}_{j}= \textsc{MirrorDescent}(\eta_{(k,j)}, \mc{W}_{(k,j)}, \nrm*{\cdot}_{p_k}^{2})$.
\end{itemize}
\end{itemize}
\item $\pi=\mathrm{Uniform}(\brk{K}\times{}\brk*{n+1})$.
\end{itemize}
Clearly the total number of sub-algorithms and hence the running time scales as $O\prn*{n\cdot{}\log(d)}$.

Referring back to the proof of \pref{thm:oco_2smooth}, and letting $(k_t, j_t)$ denote the index pair chosen by \ocoalg{} in round $t$, it is clear that for a fixed $k$, the algorithm satisfies for all $w\in\mathbb{R}^{d}$ \\
\[
\En\brk*{
\sum_{t=1}^{n}f_{t}(w_t^{(k_t,j_t)}) - 
\crl*{
\sum_{t=1}^{n}f_{t}(w)  
+ O\prn*{(\nrm{w}_{p_k}+1)L_{p_k}\sqrt{n\log\prn*{(\nrm{w}_{p_k}+1)L_{p_k}n\log(d)}/(p_k-1)}}}
}
\leq{}0.
\]
In fact, the regret guarantee for \ocoalg{} implies that
\begin{equation}
\label{eq:pk}
\En\brk*{
\sum_{t=1}^{n}f_{t}(w_t^{(k_t,j_t)}) - 
\min_{k\in\brk{N}}\crl*{
\sum_{t=1}^{n}f_{t}(w)  
+ O\prn*{(\nrm{w}_{p_k}+1)L_{p_k}\sqrt{n\log\prn*{(\nrm{w}_{p_k}+1)L_{p_k}n\log(d)}/(p_k-1)}}}
}
\leq{}0.
\end{equation}
We now appeal to the choice of discretization to deduce that
\[
\En\brk*{
\sum_{t=1}^{n}f_{t}(w_t^{(k_t,j_t)}) - 
\min_{p\in[1+\delta,2]}\crl*{
\sum_{t=1}^{n}f_{t}(w)  
+ O\prn*{(\nrm{w}_{p}+1)L_{p}\sqrt{n\log\prn*{(\nrm{w}_{p}+1)L_{p}\log(d)n}/(p-1)}}}
}
\leq{}0.
\]
Suppose there is some $p\in\brk{1+\delta, 2}$ of interest. Let $k$ be the greatest integer for which $p_k\leq{}p$. We claim that the bound
\begin{equation*}
\En\brk*{
\sum_{t=1}^{n}f_{t}(w_t^{(k_t,j_t)}) - 
\crl*{
\sum_{t=1}^{n}f_{t}(w)  
+ O\prn*{(\nrm{w}_{p_k}+1)L_{p_k}\sqrt{n\log\prn*{(\nrm{w}_{p_k}+1)L_{p_k}n\log(d)}/(p_k-1)}}}
}
\leq{}0,
\end{equation*}
implies the desired result. By duality we have that $\nrm*{w}_{p_k}\geq{}\nrm*{w}_p$ and $L_{p_k}\leq{} L_p$. To conclude, observe that $\nrm{w}_{p_k}/\nrm{w}_{p} \leq{} \nrm{w}_{p_{k}}/\nrm{w}_{p_{k+1}}\leq{}d^{\eps}=d^{1/\log(d)}=O(1)$, so the norm terms in the bound above are within constant factors of the desired bound.
\end{proof}

\begin{proof}[\pfref{thm:pca}]

Recall that for fixed $k$, the learner predicts from a class \[\mc{W}_{k} = \crl*{W\in\R^{d\times{}d}\mid{} W\succeq{}0, \nrm*{W}_{\sigma}\leq{}1,\tri*{W,I}=k},\] and experiences affine losses $f_{t}(W_t)=\tri*{I-W_{t},Y_t}$, where $Y_{t}\in\mc{Y}:=\crl*{Y\in\R^{d\times{}d}\mid{}Y\succeq{}0, \nrm*{Y}_{\sigma}\leq{}1}$. \\
The regret for this game is given by 
\begin{equation}
\sup_{W\in\mc{W}_{k}}\brk*{\sum_{t=1}^{n}\tri*{I-W_{t},Y_{t}} - \sum_{t=1}^{n}\tri*{I-W,Y_{t}}}.
\end{equation}

From \cite{nie2013online}, we have that for fixed $k$ the strategy $\textsc{Matrix Exponentiated Gradient}$ has regret bounded by
\[
O\prn*{\min\crl*{\sqrt{nk^{2}\log(n/k)},\sqrt{n(d-k)^{2}\log(n/(d-k))}}} = \wt{O}\prn*{\sqrt{n\min\crl*{k,d-k}^{2}}}.
\]
Note: The variant of $\textsc{Matrix Exponentiated Gradient}$ that obtains this strategy uses either losses or gains depending on the value of $k$. See \cite{nie2013online} for more details.

The configuration with which we invoke \ocoalg{} is:
\begin{itemize}
\item For each $i\in\brk{\ceil{\log(d/2)}+1}$:
\begin{itemize}
\item Set $R_{i}=e^{i-1}$, $L_{i}=1$.
\item $\mc{W}_{i} = \crl*{W\in\R^{d\times{}d}\mid{} W\succeq{}0, \nrm*{W}_{\sigma}\leq{}1,\tri*{W,I}=R_i}$
\item Take $\textsc{Alg}_{i}= \textsc{Matrix Exponentiated Gradient}(\mc{W}_i)$ as described in \cite{nie2013online}.
\end{itemize}
\item $\pi=\mathrm{Uniform}(\brk*{\ceil{\log(d/2)}+1})$.
\end{itemize}

As in \pref{thm:oco_2smooth} and \pref{thm:all_lp}, choosing $R_i$ to be spaced exponentially is sufficient to guarantee that there is a sub-algorithm whose regret is within a constant factor $e$ of $\wt{O}\prn*{k\sqrt{n}}$ for any choice of the rank $k$.

All that remains is that the losses of the sub-algorithms satisfy the claimed upper bound $R_i$. Observe that \ocoalg{} works with centered loss $\wt{f}_{t}(W) = -\tri*{W, Y_t}$. For any $W\in\mc{W}_k$, we have
\[
\abs*{\tri*{W,Y_t}}\leq{} \nrm*{Y_t}_{\sigma}\nrm*{W}_{\Sigma} \leq{} 1\cdot{}R_k,
\]
so the condition is satisfied.
\end{proof}
\begin{proof}[\pfref{thm:mmw}]
We will use a meta-algorithm strategy closely resembling that of the smooth Banach space setting. The only difference is that $\nrm*{\cdot}_{\Sigma}$ is not smooth, so \textsc{Matrix Multiplicative Weights}, which uses the log-trace-exponential function as a surrogate for $\nrm*{\cdot}_{\Sigma}$, is used as the sub-algorithm instead of working with $\nrm*{\cdot}_{\Sigma}$ directly.

We use the version of \textsc{Matrix Multiplicative Weights} stated in \cite{HazKalSha12} Theorem 13, which uses classes of the form $\mc{W}_{r}=\crl*{W\in\R^{d\times{}d}\mid{} W\succeq{}0, \nrm{W}_{\Sigma}\leq{}r}$ and has regret against $\mc{W}_r$ bounded by $O(r\sqrt{n\log{}d})$ whenever each loss matrix $Y_t$ has $\nrm*{Y_t}_{\sigma}\leq{}1$. Using this strategy for fixed $r$ as a sub-algorithm for \ocoalg{}, we achieve the following oracle inequality efficiently:

\noindent For each $i \in [n+1]$:
\begin{itemize} 
\item Set $R_{i}=2^{i-1}$
\item $L_{i}=1$ (we are assuming $\nrm*{Y_t}_{\sigma}\leq{}1$).
\item $\mc{W}_{i} = \crl*{W\in\R^{d\times{}d}\mid{} W\succeq{}0, \nrm{W}_{\Sigma}\leq{}R_i}$
\item $\textsc{Alg}_{i}= \textsc{Matrix Multiplicative Weights}(\mc{W}_i)$
\end{itemize}
Finally, we set $\pi=\mathrm{Uniform}(\brk*{n+1})$.
That this configuration is sufficient follows from the doubling analysis given in the proof of \pref{thm:oco_2smooth}. Losses are once again bounded via $\abs*{\tri*{W,Y_t}}\leq{}\nrm*{W}_{\Sigma}\nrm*{Y_t}_{\sigma}\leq{}R_i$ for $W\in\mathcal{W}_i$.
\end{proof}

\subsection{Proofs from \pref{sec:supervised_slow}}
\label{app:supervised}
\begin{algorithm}[h]
\caption{}\label{alg:supervised_aggregation}
{\small
\begin{algorithmic}
\Procedure{MultiScaleLearning}{$\crl*{\textsc{Alg}_{i}, R_{i},L_{i}}_{i \in [N]}$, $\pi$} \Comment{Collection of sub-algorithms, prior $\pi$.}
\State $c\gets(R_{i}\cdot{}L_{i})_{i \in [N]}$\Comment{Sub-algorithm scale parameters.}
\State Define $\tilde{\ls}(\yh, y) = \ls(\yh, y) - \ls(0,y).$\Comment{Center the loss function.}
\For{$t=1,\ldots,n$}
\State Receive context $x_{t}$
\State $\yh_{t}^{i} \gets{} \textsc{Alg}_{i}((x_1,y_1),\ldots,(x_{t-1}, y_{t-1}), x_{t})$ for each $i\in\brk{N}$.
\State $i_{t}\gets{}\textsc{MultiScaleFTPL}[c,\pi](g_{1},\ldots,g_{t-1})$.
\State Play $\yh_{t}=\yh_{t}^{i_{t}}$.
\State Observe $y_{t}$ and let $g_{t}= \prn*{\tilde{\ls}_{t}(\yh_{t}^{i}, y_t)}_{i \in [N]}$.
\EndFor
\EndProcedure
\end{algorithmic}
}
\end{algorithm}

\begin{proof}[\pfref{thm:supervised_aggregation}]
This theorem is an immediate consequence of \pref{thm:oco_aggregation}, using the absolute value $\abs{\cdot}$ as the norm. The only significant detail one must check is that the proof of \pref{thm:oco_aggregation} uses the regret statement for each sub-algorithm as a black box, and so the nonlinearity of the comparator $\F$ does not change the analysis.
\end{proof}

\begin{proof}[\pfref{thm:prm}]
This is a corollary of \pref{thm:supervised_aggregation}. That theorem, configured with one sub-algorithm for each class $\F_k$ and with $L_k=L$, $R_k=R_k$, and $\pi_{k}=1/k^{2}$, implies
\begin{equation}
\En\brk*{\sum_{t=1}^{n}\ls(\yh_{t}^{i}, y_t) - \inf_{f\in\mc{F}_{k}}\sum_{t=1}^{n}\ls(f(x_t), y_t)} \leq{} \En\brk*{\mathbf{Rad}_{n}(\mc{F}_k)} + O\prn*{R_{k}L\sqrt{n\log\prn*{R_{k}Lnk}}
} \quad\forall{}i \in [N].
\end{equation}
The final regret bounded stated follows from the assumed growth rate on $\mathbf{Rad}(\F_k)$.
\end{proof}

\begin{proof}[\pfref{thm:mkl}]
We briefly sketch the construction as follows:
\begin{enumerate}
\item For each $\mc{H}_k$, construct a sequence of nested subclasses (norm balls) as precisely as in the proof of \pref{thm:oco_2smooth}. There will be $O(n)$ sub-algorithms for each such class.
\item For each sub-algorithm in class $k$, take the prior weight $\pi$ proportional to $1/nk^{2}$.
\end{enumerate}
Using the analysis from \pref{thm:oco_2smooth} --- namely that for each norm $\nrm*{\cdot}_{\mc{H}_k}$ it is sufficient to only consider predictors with norm bounded by $e^{n}$ --- , one can see that the result follows from \pref{thm:supervised_aggregation}.
\end{proof}

\end{document}